\documentclass[letterpaper]{article} 
\usepackage{aaai2026}  
\usepackage{times}  
\usepackage{helvet}  
\usepackage{courier}  
\usepackage[hyphens]{url}  
\usepackage{graphicx} 
\usepackage{enumitem} 
\urlstyle{rm} 
\usepackage{natbib}  
\usepackage{caption} 
\frenchspacing  
\setlength{\pdfpagewidth}{8.5in} 
\setlength{\pdfpageheight}{11in} 

\usepackage{tcolorbox}
\usepackage{algorithm}
\usepackage{algorithmic}
\tcbuselibrary{skins}
\newtcolorbox{theorembox}[1][]{
  enhanced,
  arc=2mm,
  borderline west={1mm}{0pt}{gray},
  colback=gray!10,
  colframe=gray!50,
  fonttitle=\bfseries,
  title=#1
}


\usepackage{newfloat}
\usepackage{listings}

\usepackage{graphicx}%
\usepackage{multirow}%
\usepackage{amsmath,amssymb,amsfonts}%
\usepackage{amsthm}%
\usepackage{mathrsfs}%
\usepackage[title]{appendix}%
\usepackage{xcolor}%
\usepackage{textcomp}%
\usepackage{manyfoot}%
\usepackage{booktabs}%
\usepackage{algorithm}%
\usepackage{algorithmic}%
\usepackage{listings}%

\usepackage{subcaption} 

\usepackage{tikz}
\usepackage{tikz-3dplot}
\usetikzlibrary{arrows.meta, backgrounds, fadings}

\newtheorem{definition}{Definition}

\newtheorem{theorem}{Theorem}
\newtheorem{lemma}{Lemma}

\newtheorem{proposition}{Proposition}
\newtheorem{assumption}{Assumption}

\DeclareCaptionStyle{ruled}{labelfont=normalfont,labelsep=colon,strut=off} 
\lstset{%
	basicstyle={\footnotesize\ttfamily},
	numbers=left,numberstyle=\footnotesize,xleftmargin=2em,
	aboveskip=0pt,belowskip=0pt,%
	showstringspaces=false,tabsize=2,breaklines=true}
\floatstyle{ruled}
\newfloat{listing}{tb}{lst}{}
\floatname{listing}{Listing}
%
\pdfinfo{
/TemplateVersion (2026.1)
}

\setcounter{secnumdepth}{0} 

%


\title{Partial Action Replacement: Tackling Distribution Shift in Offline MARL}

\author{
    Yue Jin\textsuperscript{\rm 1},
    Giovanni Montana\textsuperscript{\rm 1, 2, 3}
}
\affiliations{
    \textsuperscript{\rm 1} Warwick Manufacturing Group, University of Warwick, Coventry, CV4 7AL, UK \\
    \textsuperscript{\rm 2} Department of Statistics, University of Warwick, Coventry, CV4 7AL, UK \\
    \textsuperscript{\rm 3} The Alan Turing Institute, 96 Euston Road, London, NW1 2DB, UK \\

    
    \{yue.jin.3, g.montana\}@warwick.ac.uk
%
}

\usepackage{bibentry}

\begin{document}

\maketitle

\begin{abstract}
Offline multi-agent reinforcement learning (MARL) is severely hampered by the challenge of evaluating out-of-distribution (OOD) joint actions. Our core finding is that when the behavior policy is factorized—a common scenario where agents act fully or partially independently during data collection—a strategy of partial action replacement (PAR) can significantly mitigate this challenge. PAR updates a single or part of agents' actions while the others remain fixed to the behavioral data, reducing distribution shift compared to full joint-action updates. Based on this insight, we develop Soft-Partial Conservative Q-Learning (SPaCQL), using PAR to mitigate OOD issue and dynamically weighting different PAR strategies based on the uncertainty of value estimation. We provide a rigorous theoretical foundation for this approach, proving that under factorized behavior policies, the induced distribution shift scales linearly with the number of deviating agents rather than exponentially with the joint-action space. This yields a provably tighter value error bound for this important class of offline MARL problems. Our theoretical results also indicate that SPaCQL adaptively addresses distribution shift using uncertainty-informed weights. Our empirical results demonstrate SPaCQL enables more effective policy learning, and manifest its remarkable superiority over baseline algorithms when the offline dataset exhibits the independence structure.
\end{abstract}

%


\section{Introduction}

Multi-agent reinforcement learning (MARL) offers a powerful paradigm for solving complex coordination problems, yet its real-world application is often constrained by the need for extensive online environment interaction \cite{Gronauer2022MultiSurvey, Wong2022DeepDirections, Dinneweth2022Multi-agentSurvey}. Offline MARL overcomes this by enabling agents to learn from fixed, pre-collected datasets. This approach, however, confronts a fundamental obstacle rooted in the combinatorial nature of multi-agent systems: the curse of dimensionality in the joint-action space. Any finite dataset provides only sparse coverage of all possible action combinations. This forces the learning algorithm to reason about the value of countless out-of-distribution (OOD) joint actions. Standard value-based methods, like Q-learning, are notoriously prone to failure in this regime \cite{Kumar2020, Kumar2019StabilizingReduction}. A function approximator, such as a neural network, can produce arbitrarily high Q-values for these unseen OOD actions. These erroneous estimates guide agents toward divergent policies that fail to generalize, making robust offline MARL a significant challenge \cite{Yang2021BelieveLearning, Shao2023CounterfactualLearning}.

This OOD problem can be understood from a geometric perspective. The offline dataset represents a sparse collection of known joint actions within the vast, high-dimensional space of all possible action combinations. A conventional Q-learning update requires evaluating a joint action where all agents adopt their new, learned policies. This is akin to querying a point far from any known data, forcing the Q-function to perform a large and unreliable extrapolation into uncharted regions of the joint-action space. The core insight of our work is that we can avoid such risky extrapolations by instead querying points that remain anchored close to the known data.

We operationalize this insight through a principle we term \emph{partial action replacement}: changing only a single or part of agents' actions while keeping the others fixed to actions from the dataset. This simple change ensures the Q-value is queried for a joint action that differs minimally from the known data—requiring only a small, local extrapolation rather than a large leap into the unknown. The intuition is straightforward: if we change one coordinate of a known data point, we stay much closer to familiar territory than if we change all coordinates simultaneously.

Crucially, this approach is most effective when the offline dataset was collected by agents acting independently or whose behaviors are loosely coordinated—a common scenario in practice, including independent human demonstrations, independently trained agents, or decentralized systems \cite{Chen2017DecentralizedLearning, Tampuu2017MultiagentLearning, Leibo2017Multi-agentDilemmas}. Under this factorized behavior policy assumption, we can prove that while the joint-action space remains exponentially large, our approach of partial replacement achieves linear scaling of distribution shift and value estimation error with respect to the number of deviating agents. This theoretical insight directly motivates our algorithmic design.

Based on this principle, we develop algorithms that navigate the critical balance between estimation stability and inter-agent coordination. We first present Individual CQL with Q-sharing (ICQL-QS), which directly implements partial replacement by changing only one agent's action at a time to maximize stability. While highly stable, this conservative approach may miss valuable coordinated behaviors. Our primary contribution, Soft Partial Conservative Q-Learning (SPaCQL), addresses this limitation by forming an adaptive algorithm that dynamically weights different coordination combinations based on uncertainty of target values, as measured by the standard deviation of a Q-function ensemble \cite{An2021Uncertainty-BasedQ-Ensemble, Osband2016DeepDQN}.

We provide a rigorous theoretical foundation for this approach. Our analysis formally proves that under factorized behavior policies, the distribution shift induced by partial action replacement scales linearly with the number of deviating agents, in stark contrast to the exponential complexity of arbitrary joint-action updates. We leverage this result to derive a novel, MARL-specific value-error bound for this important class of problems and then prove that SPaCQL adjusts distribution shift and value-error bound adaptively based on uncertainty-informed weights.

The main contributions of this paper are therefore:
\begin{itemize}
    \item We introduce and formalize the principle of partial action replacement, showing that under factorized behavior policies, it reduces distribution shift in a multi-agent setting.
    
    \item We develop SPaCQL, an algorithm that dynamically weights partial action replacement strategies based on uncertainty estimates, adapting between conservative single-agent updates and coordinated multi-agent updates based on data characteristics.
    
    \item We provide theoretical analysis including a novel value-error bound that scales linearly with the number of deviating agents rather than exponentially with the joint-action space, applicable when the behavior policy is factorized.
    
    \item Empirical results demonstrate SPaCQL's efficacy, especially its consistent and remarkable superiority on all Random and Medium-Replay datasets where agent behaviors are less coordinated.
\end{itemize}


\section{Related Work}

Offline reinforcement learning addresses learning effective policies from static datasets without environment interaction. Single-agent methods like CQL \cite{Kumar2020}, IQL \cite{Kostrikov2022OfflineQ-Learning}, and BEAR \cite{Kumar2019StabilizingReduction} tackle distribution shift by constraining policies or regularizing value functions. The multi-agent setting amplifies these challenges due to exponential joint-action space growth and coordination requirements. Offline MARL research focuses on two main approaches: constraining learned policies or regularizing value functions. Our work builds on the latter with a novel adaptive mechanism.

\paragraph{Policy-Constrained Methods}
Building on single-agent approaches like AWR \cite{Peng2019Advantage-WeightedLearning} and AWAC \cite{Nair2020AWAC:Datasets}, these methods ensure policies don't deviate excessively from the behavior policy \cite{Pan2022PlanRectification, Tseng2022OfflineDistillation}. \cite{Pan2022PlanRectification} use evolution strategies for decentralized regularization, while \cite{Tseng2022OfflineDistillation} employs a Teacher-Student paradigm where a centralized transformer predicts joint actions and individual policies mimic both actions and structural relationships.

\paragraph{Value-Constrained Methods}
These methods constrain the value function, penalizing OOD actions \cite{Yang2021BelieveLearning, Shao2023CounterfactualLearning, Barde2024AProblem, Ma2023LearningPolicies, Wang2023OfflineFactorization, Wang2023OfflineRegularization}. 

CFCQL \cite{Shao2023CounterfactualLearning} extends CQL \cite{Kumar2020} to multi-agent settings, penalizing each agent's OOD action individually while holding others constant—sharing motivation with our partial replacement. However, we use partial replacement for target value computation (not just regularization) and introduce adaptive weighting based on uncertainty.

Others adapt online MARL mechanisms like QMIX \cite{Rashid2018QMIX:Learning} and QTRAN \cite{Son2019QTRAN:Learning} to offline settings. \cite{Wang2023OfflineFactorization} and \cite{Wang2023OfflineRegularization} decompose global Q-functions: the former uses IQL's expectile regression, the latter formulates convex optimization. We differ by using centralized training with shared Q-functions, capturing coordination implicitly.

\cite{Barde2024AProblem} use world models to generate trajectories with uncertainty-modified rewards. Recent works employ diffusion models: DoF \cite{Li2025DOF:LEARNING}, MADIFF \cite{Zhu2024MADIFF:Models}, and INS \cite{Fu2025INS:LEARNING}.

Uncertainty estimation via ensembles \cite{An2021Uncertainty-BasedQ-Ensemble, Bai2022PessimisticLearning, Zhang2020RobustUncertainty, Osband2016DeepDQN, Lakshminarayanan2017SimpleEnsembles} helps avoid overestimating OOD actions. We use ensemble variance to weight partial replacement strategies—higher uncertainty reduces contribution in Q-updates, adapting to dataset characteristics.

\section{Preliminaries}

We study offline MARL in a Decentralized Markov Decision Process (Dec-MDP) $\langle \mathcal{S}, \{\mathcal{A}_i\}_{i=1}^n, P, R, n, \gamma \rangle$, where $n$ agents interact in state space $\mathcal{S}$. At each timestep, agent $i$ selects action $a_{t,i} \in \mathcal{A}_i$, forming joint action $\boldsymbol{a}_t \in \boldsymbol{\mathcal{A}} = \times_{i=1}^n \mathcal{A}_i$. The environment transitions to $s_{t+1} \sim P(\cdot|s_t, \boldsymbol{a}_t)$ and yields reward $r_t = R(s_t, \boldsymbol{a}_t)$. 

In the offline setting, agents cannot interact with the environment. Learning occurs solely from a static dataset $\mathcal{D} = \{(s, \boldsymbol{a}, r, s')_k\}$ collected by behavior policy $\boldsymbol{\mu}(\boldsymbol{a}|s)$. The goal remains learning a joint policy $\boldsymbol{\pi}(\boldsymbol{a}|s)$ that maximizes expected return $J(\boldsymbol{\pi}) = \mathbb{E}[\sum_{t=0}^{\infty} \gamma^t r_t]$, with Q-function defined as:
{\footnotesize
$$Q^{\boldsymbol{\pi}}(s, \boldsymbol{a}) = \mathbb{E}_{\boldsymbol{\pi}} \left[ \sum_{t=0}^{\infty} \gamma^t R(s_t, \boldsymbol{a}_t) \, \middle| \, s_0=s, \boldsymbol{a}_0=\boldsymbol{a} \right].$$
}
The core challenge is that distribution shift between $\boldsymbol{\pi}$ and $\boldsymbol{\mu}$ causes Q-learning to overestimate OOD joint actions, leading to policy divergence.

\section{Methodology}\label{sec:meth}

Our approach is founded on a simple insight for mitigating the challenge of OOD actions in offline MARL. The core problem is that a Q-function trained on a static dataset can produce arbitrarily wrong values for novel joint actions not seen in the data. A conventional update, which evaluates a joint action where all agents follow their new policies, is almost certain to be OOD. Our guiding principle is to avoid such large, unreliable extrapolations. Instead, we propose to evaluate a \emph{partially new} joint action that differs from the dataset by only one or a few individual actions. This small, anchored step is inherently more stable and leads to more reliable Q-value estimates. 


\subsection{ICQL-QS: A Stable but Myopic Baseline}
Our first algorithm, Individual Conservative Q-Learning with Q-sharing (ICQL-QS), is the most direct implementation of our core idea. While standard CQL penalizes OOD actions, we recognize that in MARL, a fully new joint action is almost always OOD. ICQL-QS therefore takes a more targeted approach by constructing a Bellman target that only changes one agent's action at a time.

This is formalized through an individual Bellman operator, $\mathcal{T}_i^{\text{ind}}$, for each agent $i$. This operator computes the expected future value by sampling agent $i$'s next action, $a'_i$, from its learned policy $\pi_i$, while sampling the actions of all other agents, $\boldsymbol{a}'_{-i}$, from the dataset $\mathcal{D}$:
\begin{equation}
    \mathcal{T}_i^{\text{ind}} Q(s, \boldsymbol{a}) := \mathbb{E}_{\substack{(s', \boldsymbol{a}'_{-i}) \sim \mathcal{D} \\ a'_{i} \sim \pi_{i}(\cdot|s')}} \left[r + \gamma Q(s', a'_{i}, \boldsymbol{a}'_{-i})\right].
\end{equation}
The full algorithm then learns a shared Q-function, $Q_\theta$, by minimizing the average of per-agent loss $\mathcal{L}_i(\theta)$:
\begin{equation}
\label{icql_qs_loss}
\begin{aligned}
    &\mathcal{L}_i(\theta) = \mathbb{E}_{(s, \boldsymbol{a}) \sim \mathcal{D}} \left[ (Q_\theta(s, \boldsymbol{a}) - \mathcal{T}_i^{\text{ind}} Q)^2 \right] \\
    & + \lambda \left( \mathbb{E}_{(s, \boldsymbol{a}_{-i})  \sim \mathcal{D}, a_i \sim \pi_i}[Q_\theta(s, a_i, \boldsymbol{a}_{-i})] - \mathbb{E}_{(s,\boldsymbol{a}) \sim \mathcal{D}}[Q_\theta(s, \boldsymbol{a})] \right).
\end{aligned}
\end{equation}
The first term is the squared Bellman error with respect to our individual operator, while the second is the conservative regularizer that penalizes OOD actions for agent $i$. 

Although each update appears to be ``myopic" by only considering a single agent's deviation, the use of a single, shared Q-function means that every update provides a learning signal that implicitly couples the agents and fosters coordination. We formalize this connection to a centralized learning objective in our theoretical analysis.

\begin{table}[!t]
\centering
\caption{\footnotesize Performance comparison between CFCQL and ICQL-QS}
\label{tab:mape_results}
\scriptsize
\begin{tabular}{@{}l*{8}{c}@{}}
\toprule
\multirow{2}{*}{Algorithm} & \multicolumn{4}{c}{Cooperative Navigation} & \multicolumn{4}{c}{World } \\
\cmidrule(lr){2-5} \cmidrule(lr){6-9} 
 & Exp & Med & Med-R & Rand & Exp & Med & Med-R & Rand \\
\midrule
CFCQL & \textbf{112} & \textbf{65} & 52.2 & 62.2 & \textbf{119.7} & \textbf{93.8} & 73.4 & 68 \\
ICQL-QS & 97.2 & 62.5 & \textbf{58.2} & \textbf{77.7} & 106.5 & 88.6 & \textbf{82.5} & \textbf{89.9} \\
\bottomrule
\end{tabular}
\label{compare_ICQLQS}
\end{table}

\begin{figure}[t]
  \centering
\includegraphics[width=0.95\columnwidth]{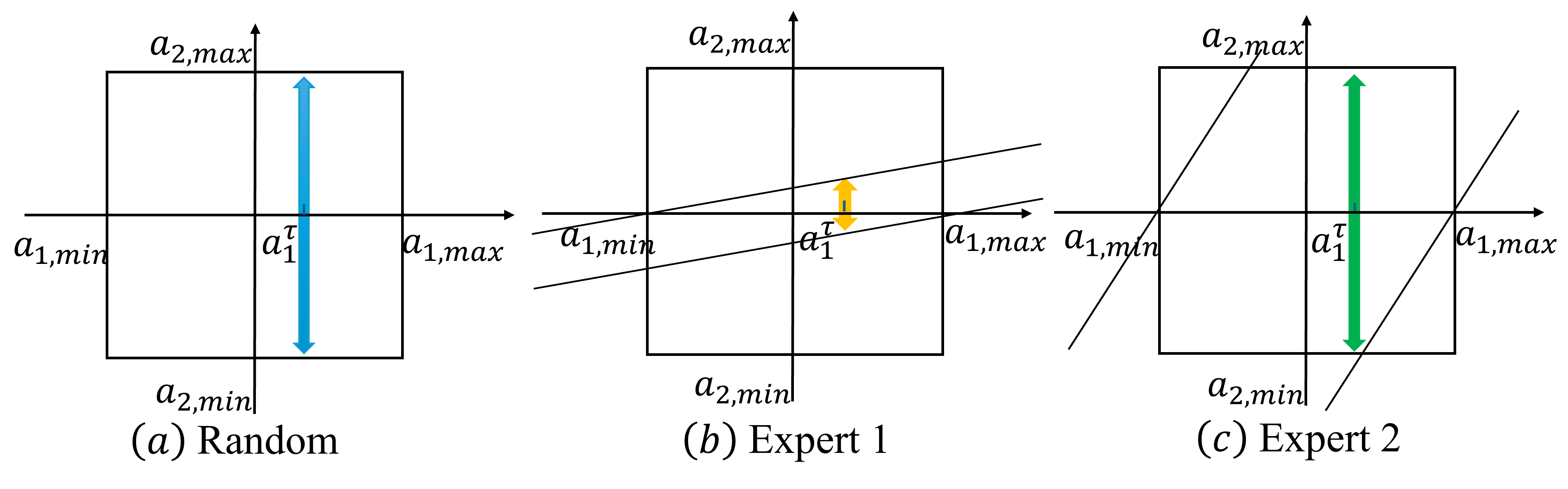}
\caption{\footnotesize Action distributions in different dataset types. Blue, yellow and green lines show the range of agent actions, with $a_1^\tau$ denoting agent 1's action from the dataset. (a) \textbf{Random datasets}: Agents act independently, so agent 2's actions may span the full range regardless of agent 1's choice. Partial action replacement (fixing agent 1) allows reasonable coverage of agent 2's action space. (b) \textbf{Expert dataset with tight correlation}: Agents' actions are strongly correlated within a narrow band. Fixing agent 1 may still lead to severe OOD issue. (c) \textbf{Expert dataset with loose correlation}: While correlated, actions have sufficient spread. Fixing agent 1 still allows reasonable coverage of agent 2's action space. This illustrates the potential advantage of partial action replacement for random datasets and some expert datasets, but it may struggle with highly coordinated behaviors.}
\label{illustration}
\end{figure}

\subsection{SPaCQL: An Adaptive Mixture of Partial Backups}
While ICQL-QS achieves stability through single-agent updates, it cannot capture the value of coordinated multi-agent improvements. Yet fully joint updates risk severe extrapolation errors. The key insight motivating SPaCQL is that the optimal number of deviating agents depends on the data: random datasets favor conservative single-agent updates, while expert datasets may not, but may support coordinated deviations. Rather than committing to either extreme, SPaCQL adaptively interpolates between them. Our initial experiments confirmed this intuition, revealing a stark performance trade-off dependent on dataset quality.

The performance comparison between CFCQL \cite{Shao2023CounterfactualLearning} and ICQL-QS showed in Table \ref{compare_ICQLQS} indicates that on datasets composed of noisy, ``random," or exploratory behavior, ICQL-QS demonstrates superior performance. By staying close to the data manifold, they minimize distribution shift and avoid the OOD extrapolation errors that plague joint-action methods in sparse data regimes. Figure \ref{illustration} illustrates the potential advantage of partial action replacement in terms of mitigating OOD issue especially for random datasets. 

Conversely, on ``expert" datasets containing highly coordinated trajectories, CFCQL using a full joint-action update can outperform ICQL-QS. On the one hand, the advantage of ICQL-QS in mitigating OOD issue may not be prominent on expert dataset, as illustrated in Figure \ref{illustration} (b). On the other hand, evaluating fully novel joint actions is essential for learning the value of synergistic maneuvers, and a factorized approach is ``coordination-blind" in this setting and can become overly pessimistic. However, as illustrated in Figure \ref{illustration} (c), on some expert datasets, partial action replacement may still have a considerable advantage in mitigating OOD issue. 

Our experiments shown in Figure \ref{fig:uncertainty_comparison} confirm this advantage of mitigating OOD issue by measuring the uncertainty of Q-estimates. We empirically compare the estimation uncertainty of different target-value strategies, measured by the standard deviation across a Q-function ensemble. As shown, a standard joint-action update (CFCQL) exhibits higher uncertainty than ICQL-QS that uses partial action replacement.

This presents a clear challenge: no single, fixed backup strategy is optimal across all data regimes. This motivates our primary objective: to design a unified algorithm that can gracefully and automatically interpolate between these two extremes, letting the data itself decide how many coordinated deviations are safe to consider at any given moment.


\begin{figure}[t]
\centering
\footnotesize
\includegraphics[width=0.75\columnwidth]{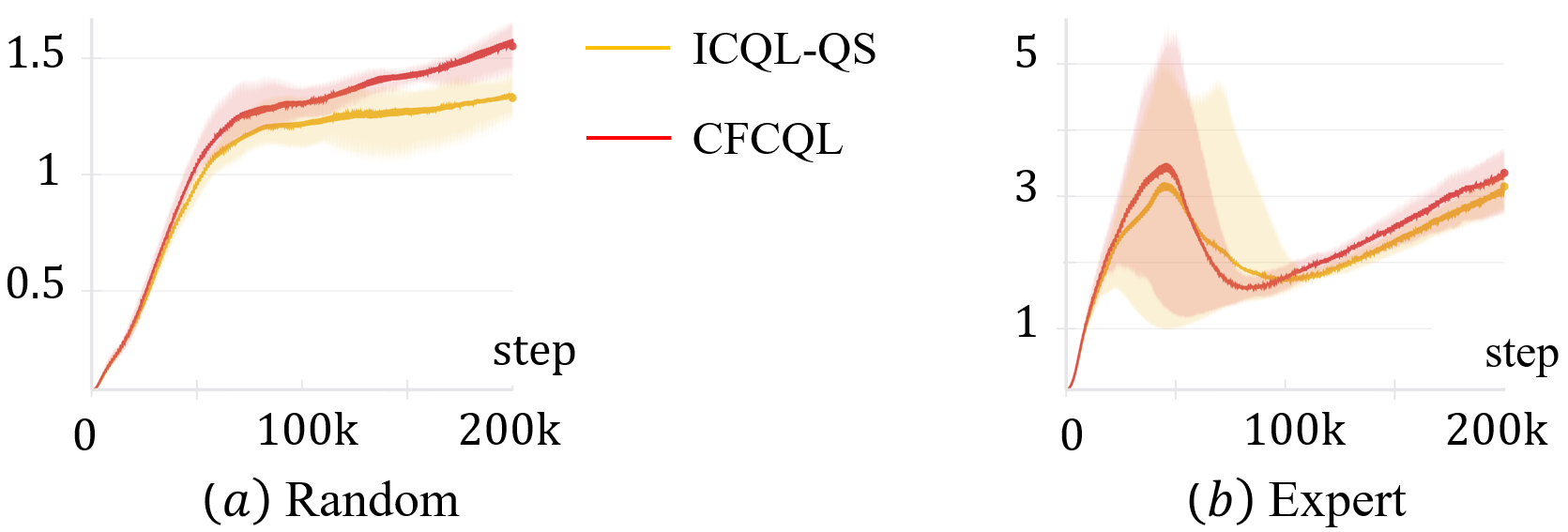}
\caption{\footnotesize
    Comparison of the uncertainty of Q-value estimation between CFCQL and ICQL-QS on the Cooperative Navigation benchmark. Uncertainty is measured as the standard deviation of Q-estimates from an ensemble of networks. ICQL-QS, which uses partial action replacement yields significantly lower estimation uncertainty on Random dataset consistently. On Expert dataset, the two methods show similar estimation uncertainty, but finally ICQL-QS still leads to lower uncertainty.
}
\label{fig:uncertainty_comparison}
\end{figure}


\subsection{The Soft-Partial Bellman Operator}
Instead of committing to a fixed strategy, we introduce SPaCQL, which constructs its Bellman target as a mixture of $n$ base operators, $\{\mathcal{T}^{(k)}\}_{k=1}^n$, where each operator corresponds to replacing exactly $k$ agents' actions.

For any integer $k \in \{1, \dots, n\}$, we define a Bellman operator $\mathcal{T}^{(k)}$ that bootstraps from a next-state action where exactly $k$ agents (chosen uniformly at random) deviate from the behavior policy, while the other $n-k$ agents' actions are taken from the logged data.
\begin{equation}
    \mathcal{T}^{(k)}Q(s,\boldsymbol{a}) := \mathbb{E}_{s'\sim\mathcal{D}, \boldsymbol{a}'^{(k)}}\left[ r+\gamma\,Q\left(s',\,\boldsymbol{a}'^{(k)}\right)\right].
\end{equation}
Each of these operators is a $\gamma$-contraction in the $\ell_\infty$ norm.

The final SPaCQL operator, $\mathcal{T}^{\text{SP}}$, is a convex combination of these base operators, with mixture weights $w_k$:
\begin{equation} \label{eq:sp_operator_final}
\mathcal{T}^{\text{SP}}Q(s,\boldsymbol{a}) := \sum_{k=1}^{n} w_k\;\mathcal{T}^{(k)}Q(s,\boldsymbol{a}).
\end{equation}


As a convex combination of $\gamma$-contractions, $\mathcal{T}^{\text{SP}}$ is itself guaranteed to be a $\gamma$-contraction.

\subsection{The SPaCQL Algorithm and Learning Objective}
The learning objective for SPaCQL, $\mathcal{L}(\theta)$, which combines the TD error for our adaptive Bellman operator, is: 
\begin{equation}\label{eq:spacql_loss_final}
    \mathcal{L}(\theta) = \mathbb{E}_{\mathcal{D}} \left[ \left(Q_\theta(s, \boldsymbol{a})-Y_{\text{SP}}\right)^2 \right] + \xi_c,
\end{equation}
where $\xi_c= \alpha \sum_{i=1}^n \lambda_i ( \mathbb{E}_{(s, \boldsymbol{a}_{-i}) \sim \mathcal{D}, a_i \sim \pi_i}[Q_\theta(s, a_i, \boldsymbol{a}_{-i})] - \mathbb{E}_{(s,\boldsymbol{a}) \sim \mathcal{D}}[Q_\theta(s, \boldsymbol{a})] )$ is a conservative penalty same as that used in CFCQL.

The target $Y_{\text{SP}}$ is constructed using the soft-partial operator, with an ensemble for conservatism:
\begin{equation}
    Y_{\text{SP}} := r+\gamma\,\sum_{k=1}^{n} w_k\; \min_{j}Q^{\text{tar}}_j\bigl(s',\boldsymbol{a}'^{(k)}\bigr).
\end{equation}

The weights are determined by uncertainty: high ensemble disagreement signals poor data coverage, so we downweight the corresponding risky deviations. We measure uncertainty via Q-ensemble variance:
\begin{equation}
\label{std_Q}
u_k = \sqrt{\operatorname{Var}_{j}\bigl[Q_{\theta_j}(s', \boldsymbol{a}'^{(k)})\bigr]}.
\end{equation}
We then map this uncertainty signal to the weight vector using the normalized standard deviation of the Q-values at the next state: $w_k = \frac{1/u_k}{\sum_k 1/u_k}$, such that higher variance pushes weight to more conservative, smaller-$k$ backups. 
A complete algorithm is shown in Algorithm \ref{alg:spacql}.


\begin{algorithm}[t!]
\footnotesize
    \caption{SPaCQL}
    \label{alg:spacql}
    \begin{algorithmic}[1]
    \STATE \textbf{Initialize:} Q‑ensemble $\{Q_{\theta_j}\}$, target ensemble $\{\bar Q_{\bar\theta_j}\}$, policies $\{\pi_{\psi_i}\}_{i=1}^n$, target policies $\{\bar\pi_{\bar\psi_i}\}_{i=1}^n$, replay buffer $\mathcal{D}$
    \FOR{each iteration}
        \STATE Sample batch $\mathcal{B}=\{(s,\boldsymbol{a},r,s',\boldsymbol{a}')\}$ from $\mathcal{D}$
        \STATE Set $\mathcal{L}(\theta)\leftarrow0$
        \FOR{each transition $(s,\boldsymbol{a},r,s',\boldsymbol{a}')$ in $\mathcal{B}$}
            \FOR{$k=1$ to $n$}
                \STATE Sample $k$ agent indices $\{\sigma_\rho\}_{\rho=1}^k$
                \STATE Sample $\{a_{\sigma_\rho}^{\pi} \sim \pi_{\sigma_\rho}(\cdot|s')\}_{\rho=1}^k$
                \STATE Construct $\boldsymbol{a}'^{(k)} \leftarrow \boldsymbol{a}' $:   $\forall \sigma_\rho \in \{\sigma_\rho\}_{\rho=1}^k$, replace the $\sigma_\rho$-th component of $\boldsymbol{a}'^{(k)}$ with $a_{\sigma_\rho}^{\pi}$
                \STATE $u_k \leftarrow \sqrt{\operatorname{Var}_{j}\bigl[Q_{\theta_j}(s', \boldsymbol{a}'^{(k)})\bigr]}$
                
                \STATE Set $y_k= \frac{1}{u_k} \min_j\bar Q_{\theta_j}(s',\boldsymbol{a}'^{(k)})$
            \ENDFOR
            \STATE $Y_{\text{SP}} = r + \gamma \sum_k y_k / \sum_k \frac{1}{u_k}$
            \STATE $\mathcal{L}(\theta) \mathrel{+}= \sum_{j} \bigl(Q_{\theta_j}(s,\boldsymbol{a})-Y_{\text{SP}}\bigr)^2 + \xi_c$
        \ENDFOR
        \STATE $\theta\leftarrow\theta-\eta_\theta\nabla_\theta \mathcal{L}(\theta)$ 
        \STATE $\bar\theta\leftarrow(1-\tau)\bar\theta+\tau \theta$
        \STATE \textcolor{gray}{\textit{--- Agent policy update ---}} 
        \FOR{each agent $i$}
            \STATE $\psi_i \leftarrow \psi_i + \eta_\pi\nabla_{\pi_i} \mathbb{E}_{s, \boldsymbol{a}_{-i}\sim \mathcal{D}, a_i\sim \pi_i} Q_{\theta_1}(s, \boldsymbol{a})$
            \STATE $\bar\psi_i\leftarrow(1-\tau)\bar\psi_i+\tau \psi_i$
        \ENDFOR   
    \ENDFOR
    \end{algorithmic}
\end{algorithm}

\section{Theoretical Analysis}\label{sec:theory}

We provide a theoretical foundation explaining why \emph{partial action replacement} alleviates distribution shift in offline MARL. Our key insight is that while the joint-action space grows exponentially in the number of agents, the distribution shift induced by partial replacement grows only linearly. All proofs are in Appendix \ref{app:proofs}.

\subsection{Setting and key assumptions}

We consider a finite Dec-MDP $\langle\mathcal{S},\{\mathcal{A}_i\}_{i=1}^n,P,R,\gamma\rangle$ and a fixed dataset $\mathcal{D}=\{(s,\boldsymbol{a},r,s')\}$ of \emph{i.i.d.} transitions. Rewards are bounded: $|R(s,\boldsymbol{a})|\le 1$.

\begin{assumption}[Factorized behavior policy]
\label{ass:factorized}
The behavior policy is factorized: $\mu(s,\boldsymbol{a})=\prod_{i=1}^{n}\mu_i(a_i\mid s)$.
\end{assumption}

The finite state-action space assumption is standard in theoretical MARL analysis and allows us to use discrete probability measures and avoid measure-theoretic complications. The i.i.d. assumption on dataset transitions simplifies the analysis by avoiding temporal correlations, though our practical algorithm works with standard trajectory data. Assumption~\ref{ass:factorized} is crucial—it means the data collection process involved independent agent policies, which is realistic for many multi-agent scenarios where agents were trained separately or act without explicit coordination. This assumption enables us to decompose distribution shifts across individual agents, which is fundamental to our analysis. The bounded rewards assumption is standard and can always be achieved through reward clipping; it ensures value functions remain bounded and Lipschitz continuous.

\paragraph{Metric and total variation.} On the finite state-action space we use the $0$-$1$ metric. We define total variation distance as $\mathrm{TV}(\nu,\nu') := \frac{1}{2}\|\nu-\nu'\|_1$. Under the $0$-$1$ metric, the \emph{Wasserstein-1 distance equals total variation}:
\begin{equation}
W_1(\nu,\nu') = \mathrm{TV}(\nu,\nu') = \frac{1}{2}\|\nu-\nu'\|_1
\end{equation}

The choice of $0$-$1$ metric simplifies the analysis significantly—in discrete spaces, it makes Wasserstein distance equivalent to total variation distance, eliminating the need for optimal transport calculations while still capturing meaningful notions of distribution similarity.

\paragraph{Lipschitz constant of $Q$.} With bounded rewards, the optimal value function obeys $\|Q^\star\|_\infty\le 1/(1-\gamma)$ and is $2/(1-\gamma)$-Lipschitz under the $0$-$1$ metric. To see this, note that for any two state-action pairs $(s,\boldsymbol{a})$ and $(s',\boldsymbol{a}')$:
\begin{equation}
|Q^\star(s,\boldsymbol{a}) - Q^\star(s',\boldsymbol{a}')| \le \sum_{t=0}^{\infty} \gamma^t \cdot 2 = \frac{2}{1-\gamma}
\end{equation}
since rewards are bounded by 1 and the $0$-$1$ metric ensures the maximum difference is 2.

This Lipschitz property is automatic given bounded rewards and provides the key tool for translating distribution shifts into value function estimation errors. The specific constant $2/(1-\gamma)$ is tight and captures how value function smoothness degrades as the discount factor approaches 1.

\paragraph{Mixed policies.} For any subset $S\subseteq\{1,\dots,n\}$, define:
\begin{equation}
\footnotesize
\pi^{(S)} := \left(\prod_{i\in S}\pi_i\right)\left(\prod_{j\notin S}\mu_j\right)
\end{equation}
and let $d^{(S)}$ denote its discounted occupancy measure.

These mixed policies are the key technical tool for our analysis. $\pi^{(S)}$ represents a policy where agents in set $S$ follow their learned policies $\pi_i$ while agents outside $S$ follow the behavior policies $\mu_j$. This construction allows us to decompose the transition from behavior policy $\mu = \pi^{(\varnothing)}$ to learned policy $\pi = \pi^{(\{1,\ldots,n\})}$ through a sequence of intermediate policies, enabling us to track how distribution shift accumulates as we add more deviating agents.


\subsection{Distribution-Shift Control via Partial Replacement}

\paragraph{Total-variation convention.}
For two stochastic kernels $\kappa,\kappa'$ on $\mathcal{S}$ we write
\[
   \mathrm{TV}(\kappa,\kappa') :=
      \sup_{s\in\mathcal{S}}\,
      \frac{1}{2}\|\kappa(s,\cdot)-\kappa'(s,\cdot)\|_1 .
\]
Under the $0$-$1$ ground metric, $W_1(\kappa,\kappa')=\mathrm{TV}(\kappa,\kappa')$.

\begin{lemma}[Linear Divergence Bound]
\label{lem:linear-div}
For every $S\subseteq\{1,\dots,n\}$:
\begin{equation}
W_1\left(d^{(S)},d^{(\varnothing)}\right) \le \frac{\gamma}{1-\gamma}\sum_{i\in S}\mathrm{TV}(\pi_i,\mu_i)
\end{equation}
\end{lemma}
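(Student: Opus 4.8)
The plan is to combine a per-state \emph{subadditivity} property of total variation for factorized (product) distributions with a standard discounted telescoping (simulation-lemma) argument that converts a uniform per-step policy discrepancy into a bound on the gap between occupancy measures. Throughout I use the excerpt's identity $W_1=\mathrm{TV}$ under the $0$-$1$ metric, so it suffices to bound $\tfrac12\|d^{(S)}-d^{(\varnothing)}\|_1$. I track the induced \emph{state} occupancy and write $\epsilon_S:=\sum_{i\in S}\mathrm{TV}(\pi_i,\mu_i)$.

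First I would establish the key structural inequality, which is exactly where Assumption~\ref{ass:factorized} enters. Fix a state $s$. Both $\pi^{(S)}(\cdot\mid s)$ and $\mu(\cdot\mid s)=\pi^{(\varnothing)}(\cdot\mid s)$ are \emph{product} measures over the agents, and they differ only in the factors indexed by $S$ (where $\pi^{(S)}$ uses $\pi_i$ in place of $\mu_i$). A hybrid argument that swaps one coordinate at a time, interpolating from $\mu(\cdot\mid s)$ to $\pi^{(S)}(\cdot\mid s)$, together with the elementary fact that two product measures differing in a single coordinate have TV equal to the TV of that coordinate's marginals, yields
\[
\mathrm{TV}\!\left(\pi^{(S)}(\cdot\mid s),\mu(\cdot\mid s)\right)\le \sum_{i\in S}\mathrm{TV}\!\left(\pi_i(\cdot\mid s),\mu_i(\cdot\mid s)\right)\le \sum_{i\in S}\mathrm{TV}(\pi_i,\mu_i)=\epsilon_S,
\]
uniformly in $s$. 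This is the step that replaces the potentially exponential joint-action mismatch by a \emph{linear} sum over the deviating agents, and it is the conceptual heart of the lemma.

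Next I would propagate this per-step bound through the dynamics. Let $P^{(S)}$ denote the state-to-state kernel induced by $\pi^{(S)}$, i.e. $P^{(S)}(\cdot\mid s)=\sum_{\boldsymbol a}\pi^{(S)}(\boldsymbol a\mid s)P(\cdot\mid s,\boldsymbol a)$; convexity of $\|\cdot\|_1$ then gives $\|P^{(S)}(\cdot\mid s)-P^{(\varnothing)}(\cdot\mid s)\|_1\le\|\pi^{(S)}(\cdot\mid s)-\mu(\cdot\mid s)\|_1\le 2\epsilon_S$ for every $s$. Writing the time-$t$ state distribution as a row vector $\rho_t$, the two chains share $\rho_0$ and the same dynamics, so adding and subtracting $\rho_t^{(S)}P^{(\varnothing)}$ and using that stochastic kernels are $\ell_1$-nonexpansive gives the recursion $\|\rho_{t+1}^{(S)}-\rho_{t+1}^{(\varnothing)}\|_1\le\|\rho_t^{(S)}-\rho_t^{(\varnothing)}\|_1+2\epsilon_S$, hence $\|\rho_t^{(S)}-\rho_t^{(\varnothing)}\|_1\le 2\epsilon_S\,t$, with the $t=0$ term vanishing because the initial distributions coincide.

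Finally I would assemble the occupancy-measure bound by summing the discounted per-step gaps: $\tfrac12\|d^{(S)}-d^{(\varnothing)}\|_1\le\tfrac12(1-\gamma)\sum_{t\ge0}\gamma^t\|\rho_t^{(S)}-\rho_t^{(\varnothing)}\|_1\le(1-\gamma)\epsilon_S\sum_{t\ge0}t\gamma^t=(1-\gamma)\epsilon_S\cdot\frac{\gamma}{(1-\gamma)^2}=\frac{\gamma}{1-\gamma}\epsilon_S$, which is precisely the claimed bound; equivalently, the same constant drops out in one line from the resolvent identity $(I-\gamma P^{(S)})^{-1}-(I-\gamma P^{(\varnothing)})^{-1}=\gamma\,(I-\gamma P^{(S)})^{-1}(P^{(S)}-P^{(\varnothing)})(I-\gamma P^{(\varnothing)})^{-1}$. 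I expect the main obstacle to be less the algebra than keeping the normalization honest: the leading factor $\gamma/(1-\gamma)$ (rather than $1/(1-\gamma)$) hinges on tracking the state occupancy so the $t=0$ term cancels and on the identity $\sum_t t\gamma^t=\gamma/(1-\gamma)^2$. The factorization inequality, though conceptually central, is technically routine once the hybrid argument is in place.
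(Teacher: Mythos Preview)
Your proof is correct and follows essentially the same logic as the paper: both rely on the factorized-TV subadditivity (your hybrid argument, the paper's induction in Step~3) and the same kernel-to-policy bound. The only difference is mechanical: the paper obtains the occupancy gap directly from the fixed-point identity $d^\phi=(1-\gamma)d_0+\gamma d^\phi P^\phi$ (your resolvent remark), whereas you unroll in time and sum $\sum_t t\gamma^t$; both routes are standard and yield the same $\gamma/(1-\gamma)$ constant.
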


The proof shows that despite the exponential joint-action space, the occupancy shift grows \emph{additively} with the number of deviating agents.

This result formalizes our core intuition: while there are $|\mathcal{A}|^n$ possible joint actions, the distribution shift from changing $k$ agents' policies is only $k$ times larger than changing a single agent's policy. This linear scaling is the theoretical foundation for why partial action replacement is fundamentally more stable than full joint-action updates.


\subsection{Value‑Estimation Error}


\begin{theorem}[Tight Value-Error Bound]\label{thm:value}
Let $\hat{Q}$ be the Q-function learned offline and $\hat{V}^\pi=\mathbb{E}_{d^\pi,\pi}\hat{Q}$. We assume that the learning process ensures $\hat{Q}$ is $\frac{2}{1-\gamma}$-Lipschitz under the $0$-$1$ metric. While this property is not automatically guaranteed for neural network approximators, it can be encouraged with techniques like spectral normalization or gradient clipping and serves as a key theoretical assumption. Then:

\begin{align}
|V^\pi-\hat{V}^\pi| &\le \varepsilon_{\mathrm{Subopt}} + \varepsilon_{\mathrm{FQI}} + \frac{4\gamma}{(1-\gamma)^2}\sum_{i=1}^{n}\mathrm{TV}(\pi_i,\mu_i)
\end{align}
where $\varepsilon_{\mathrm{Subopt}}=\|Q^\pi-Q^\star\|_\infty$ and $\varepsilon_{\mathrm{FQI}}=\|Q^\star-\hat{Q}\|_\infty$.
\end{theorem}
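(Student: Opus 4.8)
The plan is to prove the bound by a single triangle inequality that pins both the true value and the learned estimate to references evaluated on the \emph{behavior} occupancy — where the data actually lives — and then to convert the two resulting occupancy-level discrepancies into the linear distribution-shift term via Lemma~\ref{lem:linear-div}. Writing the true value as $V^\pi=\mathbb{E}_{d^\pi,\pi}[Q^\pi]$ and the estimate as $\hat V^\pi=\mathbb{E}_{d^\pi,\pi}[\hat Q]$ — both expectations over the on-policy state-action occupancy $d^{(\{1,\dots,n\})}$ — I would introduce the behavior occupancy $d^{(\varnothing)}$ paired with $\mu$ and decompose
\[
V^\pi-\hat V^\pi = \underbrace{\big(\mathbb{E}_{d^\pi,\pi}[Q^\pi]-\mathbb{E}_{d^{(\varnothing)},\mu}[Q^\pi]\big)}_{(B_1)} + \underbrace{\mathbb{E}_{d^{(\varnothing)},\mu}[Q^\pi-\hat Q]}_{(C)} - \underbrace{\big(\mathbb{E}_{d^\pi,\pi}[\hat Q]-\mathbb{E}_{d^{(\varnothing)},\mu}[\hat Q]\big)}_{(B_2)},
\]
which telescopes exactly back to $V^\pi-\hat V^\pi$. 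The triangle inequality then reduces the task to bounding $|(B_1)|$, $|(C)|$, and $|(B_2)|$ separately.

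The interior term $(C)$ is the easy one: it is an expectation of a pointwise difference, so $|(C)|\le\|Q^\pi-\hat Q\|_\infty$, and inserting $Q^\star$ with the triangle inequality for the sup-norm gives $\|Q^\pi-\hat Q\|_\infty\le\|Q^\pi-Q^\star\|_\infty+\|Q^\star-\hat Q\|_\infty=\varepsilon_{\mathrm{Subopt}}+\varepsilon_{\mathrm{FQI}}$. This recovers the first two terms of the bound and uses nothing beyond boundedness.

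The two change-of-measure terms $(B_1)$ and $(B_2)$ carry the distribution shift. For each I would apply Kantorovich--Rubinstein duality under the $0$-$1$ ground metric, where $W_1=\mathrm{TV}$: since $Q^\pi$ is $\tfrac{2}{1-\gamma}$-Lipschitz (by the same telescoping-of-bounded-rewards argument given for $Q^\star$ in the preamble) and $\hat Q$ is $\tfrac{2}{1-\gamma}$-Lipschitz by the stated assumption, both $|(B_1)|$ and $|(B_2)|$ are at most $\tfrac{2}{1-\gamma}\,W_1\!\big(d^{(\{1,\dots,n\})},d^{(\varnothing)}\big)$. Invoking Lemma~\ref{lem:linear-div} with $S=\{1,\dots,n\}$ bounds this Wasserstein gap by $\tfrac{\gamma}{1-\gamma}\sum_{i}\mathrm{TV}(\pi_i,\mu_i)$, so each of the two terms contributes $\tfrac{2\gamma}{(1-\gamma)^2}\sum_i\mathrm{TV}(\pi_i,\mu_i)$. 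Summing the three pieces yields precisely $\varepsilon_{\mathrm{Subopt}}+\varepsilon_{\mathrm{FQI}}+\tfrac{4\gamma}{(1-\gamma)^2}\sum_i\mathrm{TV}(\pi_i,\mu_i)$, the factor of $4$ being nothing more than the two separate occupancy shifts for $Q^\pi$ and $\hat Q$.

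I expect the main obstacle to be the Lipschitz control of $\hat Q$ in term $(B_2)$: unlike $Q^\pi$ and $Q^\star$, the learned $\hat Q$ is not automatically smooth, which is exactly why the statement posits the $\tfrac{2}{1-\gamma}$-Lipschitz property as an assumption — the duality step for $(B_2)$ rests entirely on it. The other delicate point, already absorbed into Lemma~\ref{lem:linear-div}, is that Assumption~\ref{ass:factorized} is what makes the Wasserstein gap scale with $\sum_i\mathrm{TV}(\pi_i,\mu_i)$, i.e.\ linearly in the number of deviating agents, rather than with the joint total variation over the exponential joint-action space; without factorization the per-agent additivity used inside the lemma collapses and the bound degrades to the exponential regime the theorem is designed to escape.
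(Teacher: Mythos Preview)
Your proposal is correct and follows essentially the same route as the paper: a triangle-inequality decomposition into sup-norm errors ($\varepsilon_{\mathrm{Subopt}}+\varepsilon_{\mathrm{FQI}}$) plus two change-of-measure terms, Kantorovich--Rubinstein on the latter using the $\tfrac{2}{1-\gamma}$-Lipschitz property, then Lemma~\ref{lem:linear-div}. The only differences are cosmetic --- the paper pivots one Lipschitz step around $Q^\star$ rather than your $Q^\pi$ (both are equally bounded, hence equally Lipschitz under the $0$-$1$ metric), and reaches the $W_1(d^\pi,d^\mu)$ bound via a telescoping sum $\sum_k W_1(d^{(k)},d^{(k-1)})$ whereas you invoke Lemma~\ref{lem:linear-div} directly with $S=\{1,\dots,n\}$; your version is marginally more direct on both counts.
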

The proof is provided in Appendix \ref{app:proofs}.

\paragraph{Remark.}
If only a single agent $k$ deviates from $\mu$, the last term reduces to
$\,\tfrac{4\gamma}{(1-\gamma)^2}\mathrm{TV}(\pi_k,\mu_k)$,
strictly improving on the usual joint‑TV bound.


\subsection{Generalization to Correlated Behavior Policies}

The guarantee in Theorem~\ref{thm:value} relies on the assumption that the behavior policy $\mu$ is fully factorized. We now relax this assumption and show that the linear scaling of distribution shift is robust, degrading gracefully with the degree of correlation in the dataset.

First, we define a measure of the maximal correlation present in the behavior policy. Let $\mu^\otimes(\boldsymbol{a}\mid s) := \prod_{i=1}^n \mu_i(a_i\mid s)$ be the product of the marginal behavior policies.

\begin{definition}[Maximal Excess Correlation]
The maximal excess correlation, $\kappa$, is the largest Total Variation distance between the true joint behavior policy and its factorized approximation, over all states:
$$\kappa := \sup_{s\in\mathcal S}\; \frac{1}{2}\bigl\lVert \mu(\,\cdot\mid s) - \mu^\otimes(\,\cdot\mid s) \bigr\rVert_1$$
where $\kappa \in [0, 1]$. If $\mu$ is perfectly factorized, then $\kappa=0$.
\end{definition}

With this, we can state a more general version of our distribution-shift bound.

\begin{lemma}[Linear Divergence with Correlations]\label{lem:linear-div-corr}
For any subset of agents $S\subseteq\{1,\dots,n\}$, the distribution shift is bounded by:
\begin{equation}
   W_{1}\bigl(d^{(S)},d^\mu\bigr) \;\le\; \frac{\gamma}{1-\gamma}\left( \sum_{i\in S} \operatorname{TV}(\pi_i,\mu_i) + \kappa \right)
\end{equation}
\end{lemma}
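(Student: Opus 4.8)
The plan is to reduce Lemma~\ref{lem:linear-div-corr} to the already-established factorized case (Lemma~\ref{lem:linear-div}) by interposing the product policy $\mu^\otimes$ as an intermediate reference. The key observation is that $\pi^{(S)}$ is built from the true marginals $\mu_i$, so relative to the \emph{product} behavior policy $\mu^\otimes = \pi^{(\varnothing)}$ it differs only in the coordinates indexed by $S$; this is exactly the situation Lemma~\ref{lem:linear-div} controls. The correlation parameter $\kappa$ then accounts for the single, separate gap between $\mu^\otimes$ and the true joint policy $\mu$.

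First I would apply the triangle inequality for $W_1$ (equivalently, for total variation under the $0$-$1$ metric) to split the target quantity:
\begin{equation}
W_1\bigl(d^{(S)}, d^\mu\bigr) \;\le\; W_1\bigl(d^{(S)}, d^{\mu^\otimes}\bigr) + W_1\bigl(d^{\mu^\otimes}, d^\mu\bigr).
\end{equation}
For the first term, I would invoke Lemma~\ref{lem:linear-div} directly: since $\mu^\otimes$ is by construction a factorized policy with marginals $\mu_i$, and $\pi^{(S)}$ deviates from it only on the agents in $S$, the lemma yields $W_1(d^{(S)}, d^{\mu^\otimes}) \le \frac{\gamma}{1-\gamma}\sum_{i\in S}\mathrm{TV}(\pi_i,\mu_i)$. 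For the second term, I would bound the occupancy-measure gap between two policies whose per-state action kernels differ by at most $\kappa$ in total variation. This uses the same simulation/telescoping argument that underlies Lemma~\ref{lem:linear-div}: a standard occupancy-measure perturbation bound gives $W_1(d^{\mu^\otimes}, d^\mu) \le \frac{\gamma}{1-\gamma}\sup_{s}\mathrm{TV}\bigl(\mu^\otimes(\cdot\mid s), \mu(\cdot\mid s)\bigr) = \frac{\gamma}{1-\gamma}\kappa$, by the definition of maximal excess correlation. Summing the two bounds gives exactly the claimed inequality.

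The main obstacle is the second term, namely establishing the occupancy-measure perturbation bound $W_1(d^{\mu^\otimes}, d^\mu) \le \frac{\gamma}{1-\gamma}\kappa$ cleanly. The subtlety is that $d^{\mu^\otimes}$ and $d^\mu$ are discounted state-action occupancies induced by \emph{rolling out} two kernels that differ at each step by $\kappa$, so the per-step discrepancy compounds across the horizon; the factor $\frac{\gamma}{1-\gamma}$ must emerge from correctly summing this geometric accumulation of single-step shifts rather than from a naive one-step bound. The safest route is to reuse the identical machinery already deployed in the proof of Lemma~\ref{lem:linear-div} — the only change being that the per-step action-distribution gap is now measured between $\mu^\otimes$ and $\mu$ and uniformly controlled by $\kappa$, rather than decomposed agent-by-agent. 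If that lemma's proof is phrased abstractly in terms of a per-state kernel TV bound, the second term follows as an immediate special case and no genuinely new estimate is needed.
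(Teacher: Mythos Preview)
The paper does not actually supply a proof of Lemma~\ref{lem:linear-div-corr}; it is stated in the main text and then immediately used to derive Theorem~\ref{thm:value-corr}, whose proof is declared ``identical to that of Theorem~\ref{thm:value}, substituting the bound from Lemma~\ref{lem:linear-div-corr}.'' So there is nothing in the paper to compare against directly.

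Your proposal is correct. The triangle-inequality split through $d^{\mu^\otimes}$ works, and your identification of $\pi^{(\varnothing)} = \mu^\otimes$ is exactly the point that makes Lemma~\ref{lem:linear-div} reusable for the first term. For the second term, you are right that Steps~1--2 of the paper's proof of Lemma~\ref{lem:linear-div} are stated for $\pi^{(S)}$ versus $\mu$ but in fact establish, for any two policies $\phi,\phi'$ sharing the same initial distribution, the abstract perturbation bound $W_1(d^{\phi},d^{\phi'}) \le \frac{\gamma}{1-\gamma}\sup_s \mathrm{TV}\bigl(\phi(\cdot\mid s),\phi'(\cdot\mid s)\bigr)$; specializing to $\phi=\mu^\otimes$, $\phi'=\mu$ gives $\frac{\gamma}{1-\gamma}\kappa$ by the definition of $\kappa$.

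A mild variant --- arguably the proof the paper has in mind, given how closely Lemma~\ref{lem:linear-div-corr} mirrors Lemma~\ref{lem:linear-div} --- applies the triangle inequality one level down, at the policy-TV stage rather than the occupancy-$W_1$ stage: Steps~1--2 of Lemma~\ref{lem:linear-div} give $W_1(d^{(S)},d^\mu)\le\frac{\gamma}{1-\gamma}\sup_s\mathrm{TV}(\pi^{(S)}(\cdot\mid s),\mu(\cdot\mid s))$, and then $\mathrm{TV}(\pi^{(S)},\mu)\le\mathrm{TV}(\pi^{(S)},\mu^\otimes)+\mathrm{TV}(\mu^\otimes,\mu)\le\sum_{i\in S}\mathrm{TV}(\pi_i,\mu_i)+\kappa$ by Step~3 and the definition of $\kappa$. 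This and your route are equivalent in substance; yours has the minor advantage of invoking Lemma~\ref{lem:linear-div} as a black box rather than reopening its proof.
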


This generalized lemma allows us to bound the value-estimation error without the factorization assumption.

\begin{theorem}[Value-Error Bound with Correlations]\label{thm:value-corr}
Under the same conditions as Theorem~\ref{thm:value}, but without the assumption of a factorized behavior policy, the value estimation error is bounded by:
\begin{align} \label{eq:value-corr-bound-alt}
    |V^\pi-\hat{V}^\pi| \le{}& \varepsilon_{\mathrm{Subopt}} + \varepsilon_{\mathrm{FQI}} \nonumber \\
    & + \frac{4\gamma}{(1-\gamma)^2}\left(\sum_{i=1}^{n}\mathrm{TV}(\pi_i,\mu_i) + \kappa\right)
\end{align}
\end{theorem}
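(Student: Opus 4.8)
The plan is to re-run the proof of Theorem~\ref{thm:value} essentially verbatim, substituting the correlated divergence bound of Lemma~\ref{lem:linear-div-corr} for the factorized bound of Lemma~\ref{lem:linear-div} at the single step where the occupancy shift $W_1(d^\pi,d^\mu)$ is controlled. The key observation is that Assumption~\ref{ass:factorized} enters the proof of Theorem~\ref{thm:value} \emph{only} through that one bound; every other step is distribution-agnostic, so the extra $+\kappa$ simply propagates through to the final inequality unchanged.

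First I would write a telescoping decomposition. Taking the true return as $V^\pi=\mathbb{E}_{d^\pi,\pi}[Q^\pi]$ and inserting the intermediate quantities $\mathbb{E}_{d^\pi,\pi}[Q^\star]$, $\mathbb{E}_{d^\mu,\pi}[Q^\star]$, and $\mathbb{E}_{d^\mu,\pi}[\hat Q]$ between $V^\pi$ and $\hat V^\pi=\mathbb{E}_{d^\pi,\pi}[\hat Q]$, the difference splits into four pieces:
\begin{align*}
V^\pi-\hat V^\pi ={}& \bigl(V^\pi-\mathbb{E}_{d^\pi,\pi}[Q^\star]\bigr) + \bigl(\mathbb{E}_{d^\pi,\pi}[Q^\star]-\mathbb{E}_{d^\mu,\pi}[Q^\star]\bigr) \\
&+ \bigl(\mathbb{E}_{d^\mu,\pi}[Q^\star]-\mathbb{E}_{d^\mu,\pi}[\hat Q]\bigr) + \bigl(\mathbb{E}_{d^\mu,\pi}[\hat Q]-\hat V^\pi\bigr),
\end{align*}
which I label $(A)$--$(D)$ in order.

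Next I would bound the four terms. Terms $(A)$ and $(C)$ are pure $\ell_\infty$ terms: by the triangle inequality they are at most $\|Q^\pi-Q^\star\|_\infty=\varepsilon_{\mathrm{Subopt}}$ and $\|Q^\star-\hat Q\|_\infty=\varepsilon_{\mathrm{FQI}}$ respectively, and require no distributional hypothesis. Terms $(B)$ and $(D)$ are the distribution-shift terms: each compares the expectation of a $\tfrac{2}{1-\gamma}$-Lipschitz function (namely $Q^\star$ in $(B)$, and $\hat Q$ in $(D)$ under the stated Lipschitz hypothesis) across the two occupancies $d^\pi$ and $d^\mu$. By Kantorovich--Rubinstein duality each is at most $\tfrac{2}{1-\gamma}\,W_1(d^\pi,d^\mu)$, so jointly they contribute $\tfrac{4}{1-\gamma}\,W_1(d^\pi,d^\mu)$. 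Applying Lemma~\ref{lem:linear-div-corr} with $S=\{1,\dots,n\}$ gives $W_1(d^\pi,d^\mu)\le\tfrac{\gamma}{1-\gamma}\bigl(\sum_{i=1}^{n}\mathrm{TV}(\pi_i,\mu_i)+\kappa\bigr)$, and multiplying by $\tfrac{4}{1-\gamma}$ produces exactly the claimed distribution-shift term.

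I do not expect a genuine obstacle in the theorem itself, since the hard analytical work has already been front-loaded into Lemma~\ref{lem:linear-div-corr}. The only points that require care are bookkeeping. First, I must confirm that the Lipschitz-plus-duality step applies cleanly to the occupancy pairing, so that the learned policy $\pi$ appears identically on both sides of $(B)$ and of $(D)$ and hence introduces no shift beyond the occupancy gap $W_1(d^\pi,d^\mu)$ itself. Second, I must verify that the constant $4$ arises legitimately from \emph{two} distinct $\tfrac{2}{1-\gamma}$-Lipschitz functions---$Q^\star$ in $(B)$ and $\hat Q$ in $(D)$---rather than from any double-counting of a single shift. Once these are checked, the correlated value-error bound follows at once.
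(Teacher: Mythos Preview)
Your proposal is correct and takes essentially the same approach as the paper: the paper's proof of Theorem~\ref{thm:value-corr} is literally the one-line observation that the argument for Theorem~\ref{thm:value} goes through verbatim once Lemma~\ref{lem:linear-div-corr} replaces Lemma~\ref{lem:linear-div} at the point where $W_1(d^\pi,d^\mu)$ is bounded, which is exactly what you do. Your four-term telescoping (keeping the action policy fixed at $\pi$ and varying only the state occupancy) is a slight reorganization of the paper's two-plus-two decomposition, and your direct invocation of Lemma~\ref{lem:linear-div-corr} with $S=\{1,\dots,n\}$ is if anything cleaner than the telescoping-over-agents route used in the appendix proof of Theorem~\ref{thm:value}, but these are cosmetic differences.
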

\begin{proof}
The proof is identical to that of Theorem~\ref{thm:value}, substituting the bound from Lemma~\ref{lem:linear-div-corr}.
\end{proof}

This result shows that correlations in the behavior policy introduce a simple additive penalty $\kappa$ to the error bound. Crucially, this penalty is independent of the number of agents, $n$. This means the ``curse of dimensionality" is still avoided; the bound remains linear in $n$ and is simply shifted upwards by a constant reflecting the data's inherent correlation. The theory therefore remains informative even when the factorization assumption is violated.

\subsection{Theoretical Guarantee of SPaCQL}
SPaCQL retains the strong theoretical guarantees of our framework without requiring any new structural assumptions on the Q-function. The expected distribution shift under the SPaCQL operator is linear in the \textit{effective} number of deviating agents, $k_{\text{eff}} = \sum_k w_k \cdot k$. This leads directly to our final value-error bound.

\begin{theorem}[SPaCQL Value-Error Bound]
Let $\hat{Q}$ be the function learned by SPaCQL and define the average single-agent policy deviation as $\overline{\mathrm{TV}}(\pi, \mu) = \frac{1}{n} \sum_{i=1}^n \mathrm{TV}(\pi_i, \mu_i)$. The value estimation error is bounded by:
\footnotesize
$$
|V^\pi-\hat{V}^\pi| \le \varepsilon_{\mathrm{Subopt}} + \varepsilon_{\mathrm{FQI}} + \frac{4\gamma}{(1-\gamma)^2}\, \mathbb{E}_{s' \sim d^\pi} \left[k_{\mathrm{eff}}(s')\right]\, \overline{\mathrm{TV}}(\pi, \mu).
$$
\end{theorem}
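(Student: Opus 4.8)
The plan is to mirror the proof of Theorem~\ref{thm:value}, replacing the single fully-deviating policy by the state-dependent mixture policy that the soft-partial operator implicitly bootstraps from, and to re-run the occupancy-shift estimate of Lemma~\ref{lem:linear-div} with that mixture in place of $\pi^{(\{1,\dots,n\})}$. The only genuinely new ingredient is a per-state total-variation bound for the SPaCQL backup distribution against the data; once it is established, the Lipschitz-to-value machinery of Theorem~\ref{thm:value} transfers with the constant $\tfrac{4\gamma}{(1-\gamma)^2}$ unchanged.

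First I would make the effective backup distribution explicit. For fixed $k$, the operator $\mathcal{T}^{(k)}$ averages over size-$k$ subsets $S$ drawn uniformly, so its induced next-state policy is $\bar\pi^{(k)}(\cdot\mid s')=\binom{n}{k}^{-1}\sum_{|S|=k}\pi^{(S)}(\cdot\mid s')$, and the full backup policy is $\pi^{\mathrm{SP}}_{s'}=\sum_k w_k(s')\,\bar\pi^{(k)}$. The core estimate is then obtained in three moves: (i) convexity of total variation gives $\mathrm{TV}(\bar\pi^{(k)},\mu^\otimes)\le\binom{n}{k}^{-1}\sum_{|S|=k}\mathrm{TV}(\pi^{(S)},\mu^\otimes)$; (ii) the product-measure subadditivity already exploited in Lemma~\ref{lem:linear-div} yields $\mathrm{TV}(\pi^{(S)},\mu^\otimes)\le\sum_{i\in S}\mathrm{TV}(\pi_i,\mu_i)$; and (iii) the counting identity that each agent lies in a fraction $k/n$ of the size-$k$ subsets collapses the double sum to $k\,\overline{\mathrm{TV}}(\pi,\mu)$. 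Mixing over $k$ with the weights $w_k(s')$ gives the advertised per-state scaling $\mathrm{TV}(\pi^{\mathrm{SP}}_{s'},\mu^\otimes)\le k_{\mathrm{eff}}(s')\,\overline{\mathrm{TV}}(\pi,\mu)$.

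Next I would insert this per-state estimate into the occupancy-divergence argument behind Lemma~\ref{lem:linear-div}. That argument bounds the discounted occupancy shift by $\tfrac{\gamma}{1-\gamma}$ times a one-step policy difference integrated against the occupancy measure; since the weights $w_k(s')$, and hence $k_{\mathrm{eff}}(s')$, vary with the state, the deviation can no longer be pulled out of the integral as in the fixed-subset case and instead emerges as the average $\mathbb{E}_{s'\sim d^\pi}[k_{\mathrm{eff}}(s')]$. Applying the $\tfrac{2}{1-\gamma}$-Lipschitz property of $\hat{Q}$ turns this occupancy shift into a value gap, and combining it with the sup-norm suboptimality and approximation terms $\varepsilon_{\mathrm{Subopt}}$ and $\varepsilon_{\mathrm{FQI}}$ exactly as in Theorem~\ref{thm:value} assembles the claimed bound.

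The step I expect to be the main obstacle is precisely this state-dependence of the mixture weights: the clean ``factor the constant out'' move that makes Lemma~\ref{lem:linear-div} immediate is unavailable, so I must check that the telescoping/coupling argument for occupancy divergence stays valid when the one-step deviation depends on $s'$, and confirm that the correct averaging measure is the evaluation occupancy $d^\pi$. A smaller point to verify is that working with the factorized surrogate $\mu^\otimes$ is free here, which it is under Assumption~\ref{ass:factorized} since then $\mu=\mu^\otimes$; dropping factorization would simply reintroduce the additive $\kappa$ term exactly as in Lemma~\ref{lem:linear-div-corr}.
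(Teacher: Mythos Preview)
Your proposal is correct and matches the paper's approach; in fact the paper only gives a two-sentence proof sketch (``follow the structure of Theorem~\ref{thm:value}, replacing $W_1(d^\pi,d^\mu)$ by the expected shift under the SPaCQL mixture $\sum_k w_k(s')\pi^{(k)}$''), so your writeup already supplies more of the argument---the convexity-of-TV step, the product subadditivity from Lemma~\ref{lem:linear-div}, and the counting identity $\binom{n}{k}^{-1}\sum_{|S|=k}\sum_{i\in S}\mathrm{TV}(\pi_i,\mu_i)=k\,\overline{\mathrm{TV}}(\pi,\mu)$---than the paper itself records. Your flagged obstacle about the state-dependence of $w_k(s')$ forcing the factor to emerge as $\mathbb{E}_{s'\sim d^\pi}[k_{\mathrm{eff}}(s')]$ rather than being pulled out as a constant is exactly the point the paper gestures at, and is the only place where the argument departs nontrivially from Theorem~\ref{thm:value}.
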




\begin{proof}[Proof Sketch]
The proof follows the same structure as that of Theorem~\ref{thm:value}. The key difference is that the distribution shift term $W_1(d^\pi, d^\mu)$ is replaced by the expected shift under the SPaCQL operator. This expectation is taken over the mixture of policies $\sum_k w_k(s') \pi^{(k)}$, leading to a dependency on the state-dependent effective number of deviations, $k_{\mathrm{eff}}(s')$.
\end{proof}

The error scales with the effective number of deviations, $k_{\text{eff}}$, which the algorithm adapts on a state-by-state basis. When the weights concentrate on $k=1$, we recover the tight ICQL-QS bound; when they shift towards $k=n$, we approach the looser, full-joint bound, thus formalizing the algorithm's adaptive nature.


\subsection{Gradient Interpretation of ICQL-QS}

While Lemma~\ref{lem:linear-div} and Theorem~\ref{thm:value} establish the stability benefits of partial action replacement, a natural question arises: can partial replacement achieve effective coordination? One might worry that updating Q-function separately with loss (\ref{icql_qs_loss}), even with a shared Q-function, may lead to miscoordinated policies that fail to capture beneficial joint behaviors.

The following result addresses this concern by revealing the implicit coordination mechanism in ICQL-QS:

\begin{proposition}[Gradient equivalence of ICQL-QS]\label{prop:grad-equiv}
Let $\mathcal{T}_i^{\mathrm{ind}}$ be the individual Bellman operator and define the \emph{averaged-individual} operator:
\begin{equation}
\mathcal{T}^{\mathrm{ai}} := \frac{1}{n}\sum_{i=1}^{n}\mathcal{T}_i^{\mathrm{ind}}
\end{equation}
Then the parameter update for the TD component in ICQL-QS is equivalent to stochastic gradient descent on centralized TD loss, under the semi-gradient assumption where the gradient is not backpropagated through the target network (i.e., treating $\mathcal{T}^{\mathrm{ai}}Q_\theta$ as a fixed target with respect to $\theta$):
\begin{equation}
\frac{1}{2}\mathbb{E}_{\mathcal{D}}\left[\left(Q_\theta-\mathcal{T}^{\mathrm{ai}}Q_\theta\right)^2\right]
\end{equation}
\end{proposition}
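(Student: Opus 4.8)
The plan is to reduce the averaged per-agent semi-gradient to the semi-gradient of a single centralized objective by exploiting the fact that the prediction term is shared across agents. First I would write the TD component of the ICQL-QS objective as $\frac{1}{n}\sum_{i=1}^{n}\mathbb{E}_{(s,\boldsymbol{a})\sim\mathcal{D}}\bigl[(Q_\theta(s,\boldsymbol{a})-\mathcal{T}_i^{\mathrm{ind}}Q_\theta(s,\boldsymbol{a}))^2\bigr]$ and record the structural fact that drives everything: because ICQL-QS trains a single shared critic $Q_\theta$ and each per-agent term is anchored at the same sampled pair $(s,\boldsymbol{a})$, the prediction $Q_\theta(s,\boldsymbol{a})$ and its parameter gradient $\nabla_\theta Q_\theta(s,\boldsymbol{a})$ are identical for every $i$; only the bootstrapping targets $\mathcal{T}_i^{\mathrm{ind}}Q_\theta$ differ across agents.

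The cleanest route is a bias--variance decomposition of the averaged loss about the mean target. Writing $T_i:=\mathcal{T}_i^{\mathrm{ind}}Q_\theta$ and $\bar{T}:=\mathcal{T}^{\mathrm{ai}}Q_\theta=\frac{1}{n}\sum_i T_i$, I would add and subtract $\bar{T}$ inside each square and expand; the cross term vanishes since $\frac{1}{n}\sum_i(\bar{T}-T_i)=0$, leaving
$$\frac{1}{n}\sum_{i=1}^{n}(Q_\theta-T_i)^2 = (Q_\theta-\bar{T})^2 + \frac{1}{n}\sum_{i=1}^{n}(T_i-\bar{T})^2.$$
The second term is the empirical variance of the individual targets and is a function of the $T_i$ alone. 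Under the stated semi-gradient assumption the targets are frozen with respect to $\theta$, so this variance term is a $\theta$-independent constant whose parameter gradient is zero. Taking expectations over $\mathcal{D}$, the semi-gradient of the ICQL-QS TD objective therefore equals that of $\mathbb{E}_{\mathcal{D}}[(Q_\theta-\mathcal{T}^{\mathrm{ai}}Q_\theta)^2]$. As a direct cross-check, differentiating term-by-term under the semi-gradient rule gives $\frac{2}{n}\sum_i\mathbb{E}_{\mathcal{D}}[(Q_\theta-T_i)\nabla_\theta Q_\theta]$, and pulling the sum inside --- legitimate precisely because $Q_\theta$ and $\nabla_\theta Q_\theta$ do not depend on $i$ --- yields $2\,\mathbb{E}_{\mathcal{D}}[(Q_\theta-\bar{T})\nabla_\theta Q_\theta]$. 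The $\tfrac{1}{2}$ in the statement absorbs this overall factor of two into the learning rate, so the two update directions coincide.

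I do not expect a genuine technical obstacle; the one point that must be handled with care is conceptual rather than computational. The claim is an equivalence of gradients (equivalently, of update directions), not of loss values: the ICQL-QS and centralized objectives genuinely differ by the target-variance term $\frac{1}{n}\sum_i(T_i-\bar{T})^2$, and they agree in gradient only because the semi-gradient assumption renders that term constant in $\theta$. I would stress that the entire collapse to the averaged operator $\mathcal{T}^{\mathrm{ai}}$ hinges on the shared-critic, common-anchor structure of ICQL-QS; if each agent used a separate critic, the prediction gradients $\nabla_\theta Q_\theta(s,\boldsymbol{a})$ would differ across $i$ and the averaging would no longer factor through a single centralized TD loss.
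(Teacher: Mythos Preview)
Your proposal is correct. Your primary route---the bias--variance decomposition $\frac{1}{n}\sum_i(Q_\theta-T_i)^2=(Q_\theta-\bar T)^2+\frac{1}{n}\sum_i(T_i-\bar T)^2$ at the level of losses---is a genuinely different and somewhat more informative argument than the paper's. The paper works purely at the gradient level: it computes $\nabla_\theta J_i=\mathbb{E}_\mathcal{D}[(Q_\theta-T_i)\nabla_\theta Q_\theta]$, averages over $i$, and pulls the sum inside by linearity to obtain $\mathbb{E}_\mathcal{D}[(Q_\theta-\bar T)\nabla_\theta Q_\theta]$, which it then identifies as the semi-gradient of the centralized loss. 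That is precisely your ``cross-check'' paragraph. What your decomposition buys over the paper's approach is an explicit identification of \emph{how} the two objectives differ (by the target-variance term, which is $\theta$-constant under the semi-gradient convention), making the gradient equivalence an immediate corollary rather than a calculation; it also makes transparent why the equivalence is of gradients and not of loss values, a point you rightly flag.
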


This result shows that ICQL-QS, despite appearing to perform individual agent updates, is mathematically equivalent to centralized training on an averaged objective. The shared Q-function acts as an implicit coordination mechanism: each agent's update influences the value estimates for all possible joint actions, enabling coordination without explicit joint reasoning. This provides theoretical justification for why partial action replacement can maintain coordination benefits while achieving the stability advantages established in our previous results.

\begin{table*}[!t]
\centering
\caption{The average normalized score on offline MARL tasks. The best performance is highlighted in bold. }
\label{tab:main_results}
\footnotesize
\begin{tabular}{c|ccccccccc}
\toprule
 & & Dataset & OMAR & MACQL & IQL & MA-TD3+BC & DoF & CFCQL & SPaCQL \\ \hline
\multirow{12}{*}{\rotatebox[origin=c]{90}{MPE}}   & \multirow{4}{*}{CN} & Exp  & 114.9 $\pm$ 2.6   & 12.2$\pm$31  & 103.7$\pm$2.5 & 108.3$\pm$3.3 &\textbf{136.4$\pm$3.9} &  112$\pm$4 &111.9$\pm$4.5   \\
                          &                                         & Med  & 47.9$\pm$18.9  & 14.3$\pm$20.2  & 28.2$\pm$3.9  & 29.3$\pm$4.8 & 75.6$\pm$8.7  & 65.0$\pm$10.2 &{ {\textbf{78.6$\pm$6.4}}} \\
                          &                                           & Med-R   & 37.9$\pm$12.3  & 25.5$\pm$5.9  & 10.8$\pm$4.5 & 15.4$\pm$5.6 & 57.4$\pm$6.8& 52.2$\pm$9.6  &{ {\textbf{71.9$\pm$13.2}}} \\
                          &                                           & Rand  & 34.4$\pm$5.3  & 45.6$\pm$8.7  & 5.5$\pm$1.1 & 9.8$\pm$4.9 & 35.9$\pm$6.8 & 62.2$\pm$8.1 &{ {\textbf{78.2$\pm$14}}} \\ \cline{2-10} 
                          
                          & \multirow{4}{*}{PP}            & Exp  & 116.2$\pm$19.8  & 108.4$\pm$21.5  & 109.3$\pm$10.1 & 115.2$\pm$12.5  &\textbf{125.6$\pm$8.6} & 118.2$\pm$13.1 &111.2$\pm$16.4 \\
                          &                                           & Med  & 66.7$\pm$23.2  & 55$\pm$43.2  & 53.6$\pm$19.9 & 65.1$\pm$29.5  & \textbf{86.3$\pm$10.6} & 68.5$\pm$21.8  &61.9$\pm$20 \\
                          &                                           & Med-R   & 47.1$\pm$15.3  & 11.9$\pm$9.2  & 23.2$\pm$12 & 28.7$\pm$20.9  & 65.4$\pm$12.5 & 71.1$\pm$6  &{ {\textbf{75.0$\pm$12.7}}} \\
                          &                                           & Rand  & 11.1$\pm$2.8  & 25.2$\pm$11.5  & 1.3$\pm$1.6 & 5.7$\pm$3.5  &16.5$\pm$6.3 & 78.5$\pm$15.6  &{ {\textbf{89.4$\pm$13.7}}} \\ \cline{2-10} 
                          & \multirow{4}{*}{World}                    & Exp  & 110.4$\pm$25.7  & 99.7$\pm$31  & 107.8$\pm$17.7 & 110.3$\pm$21.3  & \textbf{135.2$\pm$19.1} & 119.7$\pm$26.4  &112.3$\pm$7.8 \\
                          &                                           & Med  & 74.6$\pm$11.5  & 67.4$\pm$48.4  & 70.5$\pm$15.3 & 73.4$\pm$9.3 & 85.2$\pm$11.2 & 93.8$\pm$31.8 &{ {\textbf{98.1$\pm$17.7}}} \\
                          &                                           & Med-R   & 42.9$\pm$19.5  & 13.2$\pm$16.2  & 41.5$\pm$9.5 & 17.4$\pm$8.1 & 58.6$\pm$10.4 & 73.4$\pm$23.2 &{ {\textbf{105.2$\pm$11.1}}} \\
                          &                                           & Rand  & 5.9$\pm$5.2  & 11.7$\pm$11  & 2.9$\pm$4.0 & 2.8$\pm$5.5 & 13.1$\pm$2.1 & 68$\pm$20.8  &{ {\textbf{94.3$\pm$7.4}}} \\ \hline
\multirow{4}{*}{\rotatebox[origin=c]{90}{MaMujoco}} & \multirow{4}{*}{Half-C}              & Exp  & 113.5$\pm$4.3  & 50.1$\pm$20.1  & 115.6$\pm$4.2  & 114.4$\pm$3.8 & - & \textbf{118.5$\pm$4.9}  & 110.5$\pm$5.9 \\
                          &                                           & Med  & 80.4$\pm$10.2  & 51.5$\pm$26.7  & \textbf{81.3$\pm$3.7}  & 75.5$\pm$3.7 & -& 80.5$\pm$9.6  &70.3$\pm$7.8 \\
                          &                                           & Med-R   & 57.7$\pm$5.1  & 37.0$\pm$7.1  & 58.8$\pm$6.8  & 27.1$\pm$5.5 & -& 59.5$\pm$8.2  &{ {\textbf{66.1$\pm$3.4 }}}\\
                          &                                           & Rand  & 13.5$\pm$7.0  & 5.3$\pm$0.5  & 7.4$\pm$0.0  & 7.4$\pm$0.0 & - & 39.7$\pm$4.0  &{ {\textbf{43.8$\pm$4.9}}} \\ 
                          \bottomrule
\end{tabular}
\end{table*}

\section{Experimental settings and results}

\paragraph{Experimental setup} We conduct a comprehensive evaluation of SPaCQL on two widely used offline MARL benchmarks: Multi-Agent Particle Environments (MPE) and Multi-Agent MuJoCo (MaMujoco). We use the same dataset as in recent works \cite{Shao2023CounterfactualLearning, Pan2022PlanRectification, Kostrikov2022OfflineQ-Learning}.   For MPE, there are three distinct tasks: Cooperative Navigation (CN), Predator-Prey (PP), and World. For MaMujoco, there is Half-Cheetah (Half-C). To assess performance under varying data quality, each task utilizes four dataset types: Expert (Exp), Medium (Med), Medium-Replay (Med-R), and Random (Rand).

Our implementation uses an ensemble of 10 Q-networks to estimate value functions and their uncertainty. To ensure robust results, we run all experiments with 5 different random seeds and report the mean and standard deviation of the normalized scores. All hyperparameters for our method and the CFCQL baseline are identical to those specified in the original CFCQL paper. All experiments are implemented with pytorch and
run on NVIDIA Tesla V100 GPUs.

\paragraph{Baselines}
The performance of SPaCQL is compared against a suite of state-of-the-art offline MARL algorithms. These baselines are chosen to represent the primary avenues of research in the field:
\begin{itemize}
    \item Policy-constrained methods: OMAR \cite{Pan2022PlanRectification}, IQL \cite{Kostrikov2022OfflineQ-Learning}, and MA-TD3+BC \cite{Fujimoto2021ALearning, Pan2022PlanRectification}.
    \item Value-constrained methods: MACQL \cite{Shao2023CounterfactualLearning} and CFCQL \cite{Shao2023CounterfactualLearning}.
    \item Diffusion model-based methods: DoF \cite{Li2025DOF:LEARNING}, a recent approach using diffusion models.
\end{itemize}

\paragraph{Main quantitative results}
The average normalized scores across all tasks and datasets are presented in Table \ref{tab:main_results}, where all baseline algorithm scores are reported as in the corresponding papers. Overall, SPaCQL outperforms all baseline algorithms on 10 of the 16 tasks. The most significant performance gains are observed on datasets with low-quality or uncoordinated data. SPaCQL consistently demonstrates remarkable superiority over all baselines on every ``Random" and ``Medium-Replay" dataset across all four tasks (CN, PP, World, and Half-C). For instance, on the World task with the ``Random" dataset, SPaCQL achieves a score of 94.3 $\pm$ 7.4, whereas the next-best baseline, CFCQL, scores only 68 $\pm$ 20.8.

On high-quality ``Expert" datasets, the performance is comparable across several algorithms. Here, methods like DoF, which is the top performer on all three MPE Expert datasets, show strong results. This highlights that when the dataset already contains highly coordinated trajectories, keeping close to the underlying policies is enough.

\paragraph{Analysis of adaptive weights}
To verify that SPaCQL's adaptive mechanism functions as intended, we visualize the learned weights in Figure~\ref{fig:weights_comparison}. The results show the algorithm dynamically adjusts its strategy based on dataset quality. On unstructured datasets like ``Random" and ``Medium-Replay", the weight for single-agent deviations ($w_1$) is dominant, prioritizing stability by staying close to the data manifold. Conversely, on ``Expert" datasets, the weights for coordinated deviations ($w_2$ and $w_3$) increase, showing that SPaCQL shifts its focus to find better coordinated policies when the data suggests coordination is reliable.

\begin{figure}[h]
\centering
\footnotesize
\begin{subfigure}[t]{0.4\linewidth}
    \centering
    \includegraphics[width=\linewidth]{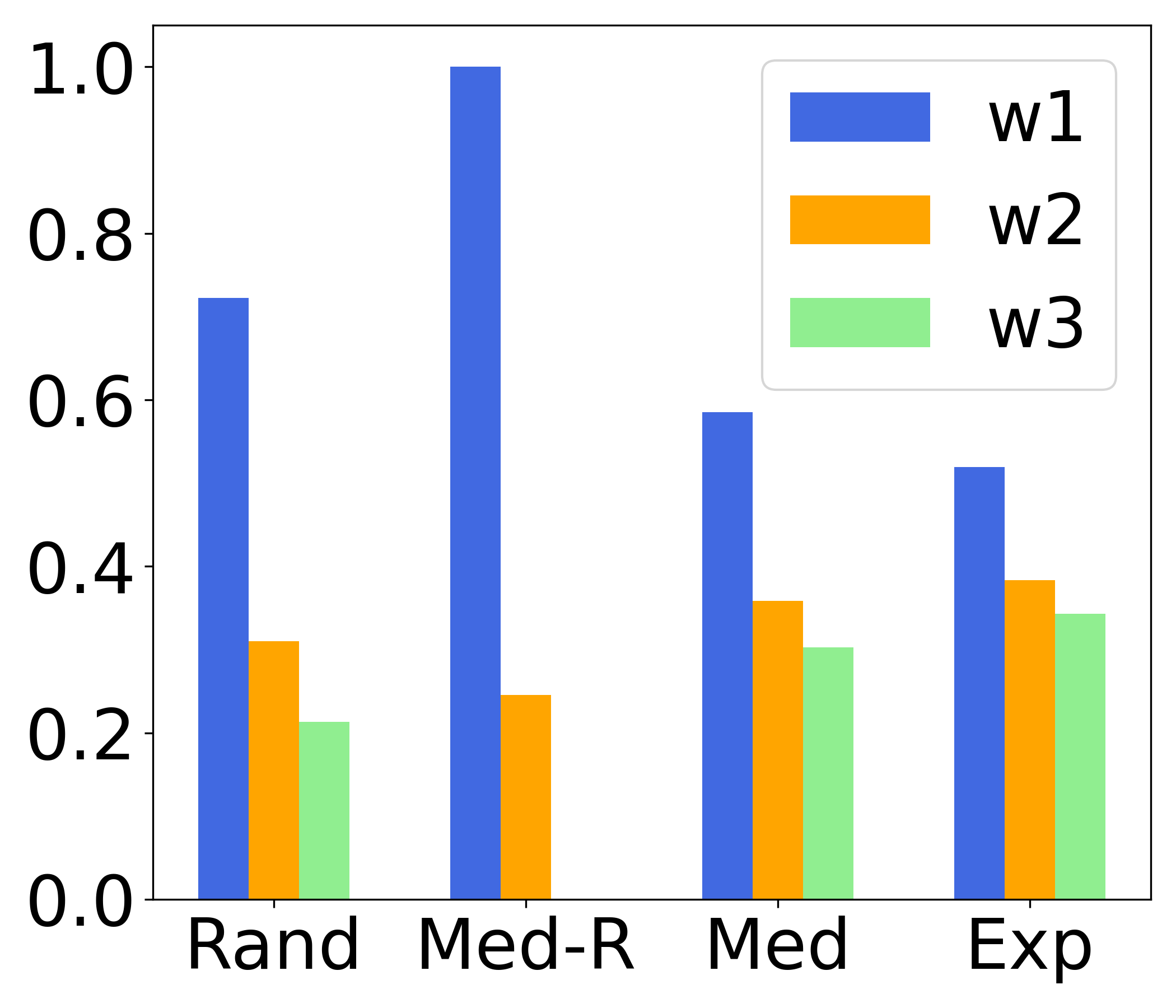}
    \caption{World}
\end{subfigure}
\hspace{3ex}
\begin{subfigure}[t]{0.4\linewidth}
    \centering
    \includegraphics[width=\linewidth]{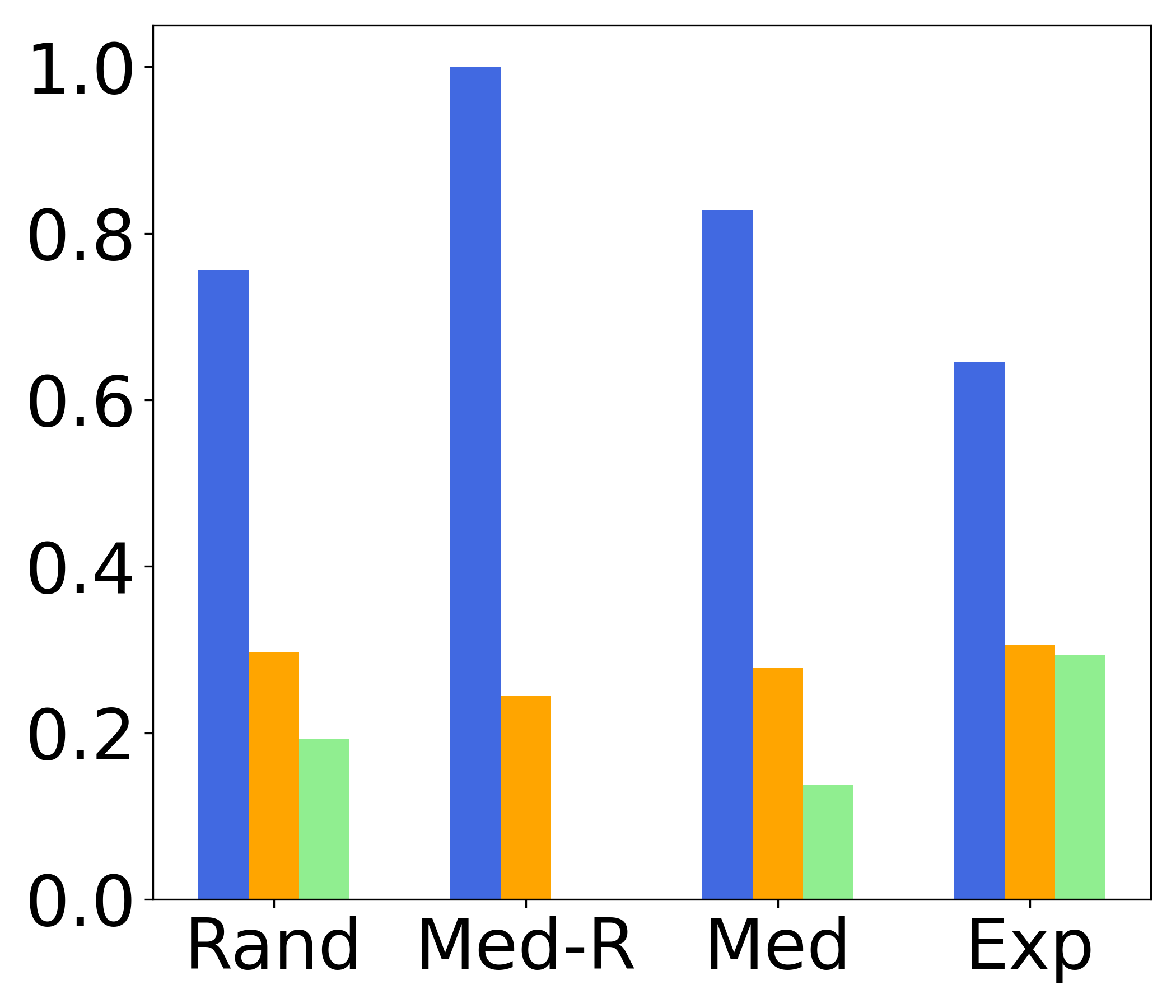}
    \caption{CN}
\end{subfigure}
\caption{\footnotesize
    Values of weights with min-max normalization.
}
\label{fig:weights_comparison}
\end{figure}

\section{Discussion and Conclusions}

Our theoretical analysis reveals a key insight: distribution shift in offline MARL scales linearly with the number of deviating agents, not exponentially with the joint-action space. This striking result suggests the ``curse of dimensionality'' in this setting may be overstated, as the distribution shift penalty accumulates additively across agents.

Our algorithm, SPaCQL, is built on this principle. It explicitly manages the stability-coordination trade-off by using partial action replacement, dynamically weighting different coordination combinations based on uncertainty to achieve a provably tighter value-error bound.

This work suggests a broader principle for offline MARL: algorithms should minimize simultaneous deviations from the behavior policy. While complete theoretical safety guarantees are pending analysis, the empirical motivation for SPaCQL is strong, as it adapts to the dataset by dynamically selecting the most reliable update strategy. Important future directions therefore include developing more sophisticated weighting schemes and uncertainty estimation techniques.

By formalizing partial action replacement and demonstrating its theoretical advantages, we provide a more optimistic perspective on the challenges of offline MARL. The linear scaling of distribution shift suggests that principled, localized approaches can overcome seemingly intractable combinatorial barriers in multi-agent learning.

\bibliography{main}


\newpage

\appendix

\onecolumn


\section{Proofs}
\label{app:proofs}
\subsection{Notation recap}%
\label{app:notation}
For a policy $\phi$ let $d^\phi$ be its discounted occupancy measure and $P^\phi$ the induced transition kernel. For two vectors $v,w$, $\lVert v\rVert_1=\sum_x|v(x)|$. We use the total variation convention $\mathrm{TV}(v,w)=\frac{1}{2}\lVert v-w\rVert_1$.

For any subset $S\subseteq\{1,\dots,n\}$, recall that $\pi^{(S)} := \left(\prod_{i\in S}\pi_i\right)\left(\prod_{j\notin S}\mu_j\right)$ and $d^{(S)}$ denotes its discounted occupancy measure.


\subsection{Proof of Lemma~\ref{lem:linear-div}}

\begin{proof}
We establish the notation $P^\pi(s'|s) = \sum_{\boldsymbol{a}} P(s'|s,\boldsymbol{a})\pi(\boldsymbol{a}|s)$.

\textbf{Step 1: occupancy difference.}
For any policy $\phi$, we have $d^\phi=(1-\gamma)d_0+\gamma d^\phi P^\phi$, where $d_0$ is the initial-state distribution. Therefore:
\begin{align}
d^{(S)}-d^{(\varnothing)}
&= \bigl[(1-\gamma)d_0 + \gamma d^{(S)}P^{\pi^{(S)}}\bigr] - \bigl[(1-\gamma)d_0 + \gamma d^{(\varnothing)}P^{\mu}\bigr] \\
&= \gamma\bigl[d^{(S)}P^{\pi^{(S)}}-d^{(\varnothing)}P^{\mu}\bigr] \\
&= \gamma\bigl[(d^{(S)}-d^{(\varnothing)})P^{\mu}\bigr] + \gamma d^{(S)}(P^{\pi^{(S)}}-P^{\mu})
\end{align}

Taking $\ell_1$ norms and using the fact that since every row of a Markov kernel $P$ sums to 1 and $P\ge0$, Jensen's inequality yields $\|vP\|_1 \le \|v\|_1$:
\begin{equation}
\lVert d^{(S)}-d^{(\varnothing)}\rVert_1 \;\le\; \frac{\gamma}{1-\gamma}\, \bigl\|d^{(S)}(P^{\pi^{(S)}}-P^\mu)\bigr\|_1
\end{equation}

\textbf{Step 2: kernel difference.}
Since $d^{(S)}$ is a probability distribution, for any kernel $A$:
\begin{align}
\lVert d^{(S)} A\rVert_1
  &= \sum_{s'}\Bigl|\sum_{s} d^{(S)}(s)A(s'|s)\Bigr|
    \le \sum_{s} d^{(S)}(s)\,
          \|A(\cdot|s)\|_1
    \le \sup_{s}\|A(\cdot|s)\|_1
\end{align}

Using $A=P^{\pi^{(S)}}-P^{\mu}$:
\begin{equation}
\bigl\|d^{(S)}(P^{\pi^{(S)}}-P^\mu)\bigr\|_1 \le \sup_{s}\lVert P^{\pi^{(S)}}(\cdot|s)-P^\mu(\cdot|s)\rVert_1
\end{equation}

Converting the kernel difference to a policy difference:
\begin{equation}
P^{\pi^{(S)}}(\cdot|s)-P^{\mu}(\cdot|s) = \sum_{\mathbf a}P(\cdot|s,\mathbf a)\,\bigl[\pi^{(S)}(\mathbf a|s)-\mu(\mathbf a|s)\bigr]
\end{equation}
Using the triangle inequality and the fact that each $P(\cdot|s,\mathbf a)$ has $\ell_1$-norm 1:
\begin{equation}
\sup_{s}\lVert P^{\pi^{(S)}}(\cdot|s)-P^\mu(\cdot|s)\rVert_1 \le \sup_{s}\!\sum_{\mathbf a} \bigl|\pi^{(S)}(\mathbf a|s)-\mu(\mathbf a|s)\bigr|
\end{equation}

\textbf{Step 3: product–difference inequality.}
For factorized distributions $p(\mathbf a)=\prod_i p_i(a_i)$ and $q(\mathbf a)=\prod_i q_i(a_i)$, we prove by induction that:
\begin{equation}
\sum_{\mathbf a}|p(\boldsymbol{a})-q(\boldsymbol{a})| \;\le\; 2\sum_{i}\mathrm{TV}(p_i,q_i)
\end{equation}

\paragraph{Induction on $n$.} Base case $n=1$ is the definition of TV. Assume the claim holds for $n-1$ factors; take $p(\mathbf a)=p_1(a_1)\tilde p(\mathbf a_{-1})$ and $q(\mathbf a)=q_1(a_1)\tilde q(\mathbf a_{-1})$:
\begin{align}
\sum_{\mathbf a}|p-q| &= \sum_{a_1,\mathbf a_{-1}} \bigl|p_1(a_1)\tilde p(\mathbf a_{-1}) - q_1(a_1)\tilde q(\mathbf a_{-1})\bigr| \\
&\le \sum_{a_1,\mathbf a_{-1}} |p_1(a_1)|\,|\tilde p(\boldsymbol{a}_{-1})-\tilde q(\boldsymbol{a}_{-1})| + \sum_{a_1,\mathbf a_{-1}} |\tilde q(\mathbf a_{-1})|\,|p_1(a_1)-q_1(a_1)| \\
&= \sum_{\mathbf a_{-1}}|\tilde p(\boldsymbol{a}_{-1})-\tilde q(\boldsymbol{a}_{-1})| + 2\,\mathrm{TV}(p_1,q_1)
\end{align}
By the induction hypothesis, the first term is $\le 2\sum_{i=2}^n \mathrm{TV}(p_i,q_i)$, completing the proof.

\textbf{Step 4: combine and translate to $W_1$.}
Applying Step 3 to $\pi^{(S)}$ and $\mu$ (which differ only for agents in $S$):
\begin{equation}
\sup_s \sum_{\boldsymbol{a}} |\pi^{(S)}(\boldsymbol{a}|s) - \mu(\boldsymbol{a}|s)| \le 2\sum_{i \in S} \mathrm{TV}(\pi_i(\cdot|s), \mu_i(\cdot|s))
\end{equation}
Combining Steps 1-3:
\begin{equation}
\lVert d^{(S)}-d^{(\varnothing)}\rVert_1 \;\le\; \frac{2\gamma}{1-\gamma} \sum_{i\in S}\mathrm{TV}(\pi_i,\mu_i)
\end{equation}
Finally, since $\mathrm{TV}(p,q)=\frac{1}{2}\|p-q\|_1$ and $W_1=\mathrm{TV}$ under the $0$-$1$ metric (because the transport cost of moving mass between distinct points is 1):
\begin{equation}
W_1(d^{(S)}, d^{(\varnothing)}) = \frac{1}{2}\|d^{(S)} - d^{(\varnothing)}\|_1 \le \frac{\gamma}{1-\gamma}\sum_{i \in S} \mathrm{TV}(\pi_i, \mu_i)
\end{equation}
\end{proof}


\subsection{Proof of Theorem~\ref{thm:value}}
\begin{proof}
We assume that $\hat{Q}$ is $\frac{2}{1-\gamma}$-Lipschitz under the $0$-$1$ metric, which holds when $\hat{Q}$ is obtained through contractive updates that preserve the Lipschitz constant of $Q^\star$.

Let $f := Q^\star - \hat{Q}$. By the triangle inequality:
\begin{align}
|V^\pi - \hat{V}^\pi|
&= \bigl|\mathbb{E}_{d^\pi,\pi}[Q^\pi] - \mathbb{E}_{d^\pi,\pi}[\hat{Q}]\bigr| \\
&= \bigl|\mathbb{E}_{d^\pi,\pi}[Q^\pi - \hat{Q}]\bigr| \\
&= \bigl|\mathbb{E}_{d^\pi,\pi}[Q^\pi - Q^\star + Q^\star - \hat{Q}]\bigr| \\
&\le \bigl|\mathbb{E}_{d^\pi,\pi}[Q^\pi - Q^\star]\bigr| + \bigl|\mathbb{E}_{d^\pi,\pi}[Q^\star - \hat{Q}]\bigr| \\
&= \underbrace{\bigl|\mathbb{E}_{d^\pi,\pi}[Q^\pi - Q^\star]\bigr|}_{(\mathrm{I})}
   + \underbrace{\bigl|\mathbb{E}_{d^\pi,\pi}[f]\bigr|}_{(\mathrm{II})}
\end{align}

\paragraph{Term (I).}
\begin{equation}
(\mathrm{I}) \le \|Q^\pi - Q^\star\|_\infty = \varepsilon_{\mathrm{Subopt}}
\end{equation}

\paragraph{Term (II).}
\begin{align}
(\mathrm{II}) &= \bigl|\mathbb{E}_{d^\pi,\pi}[f]\bigr| \\
&= \bigl|\mathbb{E}_{d^\pi,\pi}[f] - \mathbb{E}_{d^\mu,\mu}[f] + \mathbb{E}_{d^\mu,\mu}[f]\bigr| \\
&\le \bigl|\mathbb{E}_{d^\pi,\pi}[f] - \mathbb{E}_{d^\mu,\mu}[f]\bigr| + \bigl|\mathbb{E}_{d^\mu,\mu}[f]\bigr|
\end{align}

For the second term:
\begin{equation}
\bigl|\mathbb{E}_{d^\mu,\mu}[f]\bigr| = \bigl|\mathbb{E}_{d^\mu,\mu}[Q^\star - \hat{Q}]\bigr| \le \|Q^\star - \hat{Q}\|_\infty = \varepsilon_{\mathrm{FQI}}
\end{equation}

For the first term, using the Lipschitz properties of both $Q^\star$ and $\hat{Q}$:
\begin{align}
\bigl|\mathbb{E}_{d^\pi,\pi}[f] - \mathbb{E}_{d^\mu,\mu}[f]\bigr| &= \bigl|\mathbb{E}_{d^\pi,\pi}[Q^\star - \hat{Q}] - \mathbb{E}_{d^\mu,\mu}[Q^\star - \hat{Q}]\bigr| \\
&\le \bigl|\mathbb{E}_{d^\pi,\pi}[Q^\star] - \mathbb{E}_{d^\mu,\mu}[Q^\star]\bigr| + \bigl|\mathbb{E}_{d^\pi,\pi}[\hat{Q}] - \mathbb{E}_{d^\mu,\mu}[\hat{Q}]\bigr| \\
&\le \frac{2}{1-\gamma} W_1(d^\pi,d^\mu) + \frac{2}{1-\gamma} W_1(d^\pi,d^\mu) \\
&= \frac{4}{1-\gamma} W_1(d^\pi,d^\mu)
\end{align}

Define the telescoping sequence:
\begin{align}
d^{(0)} &:= d^\mu \\
d^{(k)} &:= d^{\pi^{(\{1,\ldots,k\})}} \text{ for } k = 1,\ldots,n \\
d^{(n)} &:= d^\pi
\end{align}

By the triangle inequality and Lemma~\ref{lem:linear-div}:
\begin{align}
W_1(d^\pi,d^\mu) &\le \sum_{k=1}^n W_1(d^{(k)}, d^{(k-1)}) \\
&\le \sum_{k=1}^n \frac{\gamma}{1-\gamma} \mathrm{TV}(\pi_k, \mu_k) \\
&= \frac{\gamma}{1-\gamma} \sum_{i=1}^{n}\mathrm{TV}(\pi_i,\mu_i)
\end{align}

Therefore:
\begin{equation}
(\mathrm{II}) \le \varepsilon_{\mathrm{FQI}} + \frac{4\gamma}{(1-\gamma)^2} \sum_{i=1}^{n}\mathrm{TV}(\pi_i,\mu_i)
\end{equation}

Combining terms (I) and (II):
\begin{equation}
|V^\pi - \hat{V}^\pi| \le \varepsilon_{\mathrm{Subopt}} + \varepsilon_{\mathrm{FQI}} + \frac{4\gamma}{(1-\gamma)^2} \sum_{i=1}^{n}\mathrm{TV}(\pi_i,\mu_i)
\end{equation}
\end{proof}


\subsection{Proof of Proposition~\ref{prop:grad-equiv}}
\begin{proof}
Let $J_i(\theta)=\frac{1}{2}\mathbb{E}_{\mathcal{D}}\left[(Q_\theta-\mathcal{T}_i^{\mathrm{ind}}Q_\theta)^2\right]$ be the TD loss for agent $i$. 

Using the chain rule for the squared loss:
\begin{align}
\nabla_\theta J_i(\theta) &= \nabla_\theta \left[\frac{1}{2}\mathbb{E}_{\mathcal{D}}\left[(Q_\theta-\mathcal{T}_i^{\mathrm{ind}}Q_\theta)^2\right]\right] \\
&= \frac{1}{2}\mathbb{E}_{\mathcal{D}}\left[\nabla_\theta (Q_\theta-\mathcal{T}_i^{\mathrm{ind}}Q_\theta)^2\right] \\
&= \frac{1}{2}\mathbb{E}_{\mathcal{D}}\left[2(Q_\theta-\mathcal{T}_i^{\mathrm{ind}}Q_\theta) \nabla_\theta (Q_\theta-\mathcal{T}_i^{\mathrm{ind}}Q_\theta)\right] \\
&= \mathbb{E}_{\mathcal{D}}\left[(Q_\theta-\mathcal{T}_i^{\mathrm{ind}}Q_\theta) \nabla_\theta (Q_\theta-\mathcal{T}_i^{\mathrm{ind}}Q_\theta)\right]
\end{align}

Since $\mathcal{T}_i^{\mathrm{ind}}Q_\theta$ is treated as a constant target with respect to the current parameters $\theta$:
\begin{equation}
\nabla_\theta (Q_\theta-\mathcal{T}_i^{\mathrm{ind}}Q_\theta) = \nabla_\theta Q_\theta
\end{equation}

Therefore:
\begin{equation}
\nabla_\theta J_i(\theta) = \mathbb{E}_{\mathcal{D}}\left[(Q_\theta-\mathcal{T}_i^{\mathrm{ind}}Q_\theta) \nabla_\theta Q_\theta\right]
\end{equation}

Define the averaged-individual operator:
\begin{equation}
\mathcal{T}^{\mathrm{ai}}Q_\theta = \frac{1}{n}\sum_{i=1}^n \mathcal{T}_i^{\mathrm{ind}}Q_\theta
\end{equation}

The averaged objective is:
\begin{equation}
J^{\mathrm{ai}}(\theta) = \frac{1}{n}\sum_{i=1}^n J_i(\theta)
\end{equation}

Taking the gradient:
\begin{align}
\nabla_\theta J^{\mathrm{ai}}(\theta) &= \nabla_\theta \left[\frac{1}{n}\sum_{i=1}^n J_i(\theta)\right] \\
&= \frac{1}{n}\sum_{i=1}^n \nabla_\theta J_i(\theta) \\
&= \frac{1}{n}\sum_{i=1}^n \mathbb{E}_{\mathcal{D}}[(Q_\theta-\mathcal{T}_i^{\mathrm{ind}}Q_\theta) \nabla_\theta Q_\theta]
\end{align}

By linearity of expectation:
\begin{align}
\nabla_\theta J^{\mathrm{ai}}(\theta) &= \mathbb{E}_{\mathcal{D}}\left[\frac{1}{n}\sum_{i=1}^n (Q_\theta-\mathcal{T}_i^{\mathrm{ind}}Q_\theta) \nabla_\theta Q_\theta\right] \\
&= \mathbb{E}_{\mathcal{D}}\left[\left(\frac{1}{n}\sum_{i=1}^n Q_\theta - \frac{1}{n}\sum_{i=1}^n \mathcal{T}_i^{\mathrm{ind}}Q_\theta\right) \nabla_\theta Q_\theta\right] \\
&= \mathbb{E}_{\mathcal{D}}\left[\left(Q_\theta - \frac{1}{n}\sum_{i=1}^n \mathcal{T}_i^{\mathrm{ind}}Q_\theta\right) \nabla_\theta Q_\theta\right] \\
&= \mathbb{E}_{\mathcal{D}}[(Q_\theta-\mathcal{T}^{\mathrm{ai}}Q_\theta) \nabla_\theta Q_\theta]
\end{align}

This expression is precisely the gradient of the centralized TD loss:
\begin{equation}
J^{\mathrm{centralized}}(\theta) = \frac{1}{2}\mathbb{E}_{\mathcal{D}}\left[(Q_\theta-\mathcal{T}^{\mathrm{ai}}Q_\theta)^2\right]
\end{equation}

To verify, the gradient of this centralized loss is:
\begin{align}
\nabla_\theta J^{\mathrm{centralized}}(\theta) &= \mathbb{E}_{\mathcal{D}}\left[(Q_\theta-\mathcal{T}^{\mathrm{ai}}Q_\theta) \nabla_\theta Q_\theta\right]
\end{align}

which matches our derived expression.

\textbf{Note:} This gradient equivalence holds for the temporal difference component only. The CQL regularizer is applied uniformly to the shared Q-function and does not affect this equivalence.
\end{proof}


\subsection{Lipschitz constant derivation}
\label{app:lipschitz}

We show that under the $0$-$1$ metric, $Q^\star$ is $\frac{2}{1-\gamma}$-Lipschitz.

Under the $0$-$1$ metric, the distance between any two distinct points is:
\begin{equation}
d((s,\boldsymbol{a}), (s',\boldsymbol{a}')) = \begin{cases}
0 & \text{if } (s,\boldsymbol{a}) = (s',\boldsymbol{a}') \\
1 & \text{if } (s,\boldsymbol{a}) \neq (s',\boldsymbol{a}')
\end{cases}
\end{equation}

For $Q^\star$ to be $L$-Lipschitz, we need:
\begin{equation}
|Q^\star(s,\boldsymbol{a}) - Q^\star(s',\boldsymbol{a}')| \le L \cdot d((s,\boldsymbol{a}), (s',\boldsymbol{a}'))
\end{equation}

When $(s,\boldsymbol{a}) = (s',\boldsymbol{a}')$, the inequality holds trivially. When $(s,\boldsymbol{a}) \neq (s',\boldsymbol{a}')$, we have $d = 1$, so we need:
\begin{equation}
|Q^\star(s,\boldsymbol{a}) - Q^\star(s',\boldsymbol{a}')| \le L
\end{equation}

Since rewards are bounded by $|R(s,\boldsymbol{a})| \le 1$, the optimal Q-function satisfies:
\begin{align}
Q^\star(s,\boldsymbol{a}) &= \mathbb{E}\left[\sum_{t=0}^\infty \gamma^t R(s_t,\boldsymbol{a}_t) \,\middle|\, s_0=s, \boldsymbol{a}_0=\boldsymbol{a}\right] \\
|Q^\star(s,\boldsymbol{a})| &\le \mathbb{E}\left[\sum_{t=0}^\infty \gamma^t |R(s_t,\boldsymbol{a}_t)|\right] \\
&\le \sum_{t=0}^\infty \gamma^t \cdot 1 = \frac{1}{1-\gamma}
\end{align}

Therefore, for any two state-action pairs:
\begin{align}
|Q^\star(s,\boldsymbol{a}) - Q^\star(s',\boldsymbol{a}')| &\le |Q^\star(s,\boldsymbol{a})| + |Q^\star(s',\boldsymbol{a}')| \\
&\le \frac{1}{1-\gamma} + \frac{1}{1-\gamma} = \frac{2}{1-\gamma}
\end{align}

Hence, $Q^\star$ is $\frac{2}{1-\gamma}$-Lipschitz under the $0$-$1$ metric.



\end{document}